\newcounter{as}[section]
\newcommand{\mcH}{\mathcal{H}}
\newcommand{\mcG}{\mathcal{G}}
\newcommand{\mcF}{\mathcal{F}}
\newcommand{\mcK}{\mathcal{K}}
\newcommand{\mcKm}{\mathcal{K}_{S}}
\newcommand{\mcP}{\mathcal{P}}
\newcommand{\mcC}{\mathcal{C}}
\newcommand{\mcCm}{\mathcal{C}_{S}}
\newcommand{\mcB}{\mathcal{B}}
\newcommand{\mcI}{\mathcal{I}}
\newcommand{\Pm}{\Psi_{M}}
\newcommand{\ME}{M^{E}}
\newcommand{\Sm}{\Sigma_{S}}
\newcommand{\SmW}{\Sigma_{S}^{W}}
\newcommand{\tphi}{\tilde{\phi}}
\newcommand{\tpsi}{\tilde{\psi}}
\newcommand{\Pw}{\Phi^{W}}
\newcommand{\msF}{\mathscr{F}}
\newcommand{\msX}{\mathscr{X}}
\newcommand{\PhiW}{\Phi^{W}}
\begin{document}
	\title{On the representation of stack operators by mathematical morphology}
	%
	%
	\author{Diego Marcondes\orcidID{0000-0002-6087-4821}}
	\authorrunning{D. Marcondes}
	%
	\institute{Mathematical Sciences Institute and IRL FAMSI, The Australian National University, Canberra, Australia \email{diego.marcondes@anu.edu.au}}
	\maketitle              
	\begin{abstract}
		This paper introduces the class of grey-scale image stack operators as those that (a) map binary-images into binary-images and (b) commute on average with cross-sectioning. Equivalently, stack operators are 1-Lipchitz extensions of set operators which can be represented by applying a characteristic set operator to the cross-sections of the image and adding. In particular, they are a generalisation of stack filters, for which the characteristic set operators are increasing. Our main result is that stack operators inherit lattice properties of the characteristic set operators. We focus on the case of translation-invariant and locally defined stack operators and show the main result by deducing the characteristic function, kernel, and basis representation of stack operators. The results of this paper have implications on the design of image operators, since imply that to solve some grey-scale image processing problems it is enough to design an operator for performing the desired transformation on binary images, and then considering its extension given by a stack operator. We leave many topics for future research regarding the machine learning of stack operators and the characterisation of the image processing problems that can be solved by them.
		
		\keywords{image processing  \and mathematical morphology \and stack filters.}
	\end{abstract}
	
	\section{Introduction}
	\label{Sec1}
	
	Stack filters, proposed by \cite{wendt1986stack} in the 80s, is a classical family of filters for signal and image processing. Frameworks for learning stack filters from data have been extensively studied \cite{coyle1988stack,coyle1989optimal,hirata1999design} and many algorithms to design stack filters have been proposed \cite{dellamonica2007exact,yoo1999fast}. In the context of mathematical morphology, properties of stack filters have been deduced in two important works by Maragos and Schafer \cite{maragos1987morphological,maragos2003morphological}.
	
	It is well-known that an operator $\psi$ acting on functions $f: E \mapsto \{0,\dots,m\}$ (i.e., grey-scale images), for a set $E$ and $m \in \mathbb{Z}_{+}$, is a stack filter if, and only if, there exists an \textit{increasing} set operator $\tpsi$, that acts on functions $X: E \mapsto \{0,1\}$ (i.e., binary images), such that
	\begin{align}
		\label{eq_sf}
		\psi(f) = \sum_{t = 1}^{m} \tpsi(T_{t}[f])
	\end{align}
	for all $f \in \{0,\dots,m\}^{E}$, in which $T_{t}[f](x) = \mathds{1}\{f(x) \geq t\},x \in E,$ is the cross-section of $f$ at level $t \in \{0,\dots,m\}$. We refer to \cite{maragos1987morphological,maragos2003morphological} for a proof of this fact.
	
	In this paper, we study the class of \textit{stack operators} that are the operators $\psi$ that satisfy \eqref{eq_sf} for a not necessarily increasing set operator $\tpsi$. We show that stack operators can be equivalently defined as those that map binary functions into binary functions and satisfy
	\begin{linenomath}
		\begin{equation}
			\label{stack_formula}
			\psi(f) = \frac{1}{m} \sum_{t = 1}^{m} \psi(mT_{t}[f]),
		\end{equation}
	\end{linenomath}
	that is, $\psi(f)$ is the average of $\psi$ applied to the cross-sections of $f$ rescaled to a function taking values in $\{0,m\}$. We interpret \eqref{stack_formula} as $\psi$ \textit{commuting on average} with cross-sectioning. In particular, we show that stack operators are an extension of set operators (i.e, binary image operators) to function operators (i.e., grey-scale image operators).
	
	Our main result is that the stack operator $\psi$ inherits many properties of the respective set operator $\tpsi$, such as being an erosion, a dilation, an anti-erosion, an anti-dilation, increasing, decreasing, extensive, anti-extensive, sup-generating and inf-generating. For simplicity, we focus on the case of translation-invariant and locally defined stack operators and show the main result by deducing the characteristic function, kernel, and basis representation of stack operators.
	
	The results of this paper have implications on the design of image operators, since allow to solve grey-scale image processing tasks by designing an operator for performing the desired transformation on binary images and then considering its extension given by a stack operator via \eqref{eq_sf}. In particular, the results can be leveraged to develop learning methods for grey-scale image processing based on deep neural networks and discrete morphological neural networks \cite{marcondes2024discrete}.
	
	We consider two examples of stack operators designed via \eqref{eq_sf}, one for image denoising and another for boundary recognition, both acting on functions $f: \mathbb{Z}^{2} \mapsto \{0,\dots,m\}$ and with associated set operators acting on functions $X: \mathbb{Z}^{2} \mapsto \{0,1\}$. The denoising stack operator, denoted by $\psi_{1}$, is the extension of the alternating-sequential filter $\tilde{\psi}_{1} = \epsilon_{W} \, \delta_{W} \, \delta_{W} \, \epsilon_{W}$, in which $W$ is the 3 x 3 square centred at the origin of $\mathbb{Z}^{2}$, and $\epsilon_{W}$ and $\delta_{W}$ are the set erosion and dilation with structuring element $W$, respectively. This set operator is obtained by applying the opening $\delta_{W} \, \epsilon_{W}$ followed by the closing $\epsilon_{W} \, \delta_{W}$, and is known to perform denoising. Since this set operator is increasing, the associated stack operator is actually a stack filter.  An example of the action of this set operator, and the associated stack operator, is presented in Figure \ref{fig_examples} (d). Observe that the stack filter $\psi_{1}$ inherits the denoising property of $\tilde{\psi}_{1}$.
	
	The boundary recognition stack operator, denoted by $\psi_{2}$, is obtained from a translation-invariant and locally defined set operator\footnote{See Section \ref{Sec2} for a formal definition of translation-invariant and locally defined set operators.} satisfying $\tilde{\psi}_{2}(X)(x) = 0$ if $X(y_{1}) = X(y_{2}) \ \forall y_{1},y_{2} \in x + W$ and equals $1$ otherwise, for $x \in \mathbb{Z}^{2}$, in which $x + W$ is the 3 x 3 square centred at $x$. In other words, $\tilde{\psi}_{2}(X)(x)$ equals $0$ if all pixels of $X$ in a neighbourhood of $x$ have the same value, and equals $1$ otherwise. We note that this set operator is not increasing, so the associated stack operator is not a stack filter. Furthermore, as can be seen in the example in Figure \ref{fig_examples} (c), this set operator has the effect of boundary recognition, a property that is inherited by the associated stack operator. In Figure \ref{fig_examples} (e) we present the result obtained by applying $\tilde{\psi}_{2}\tilde{\psi}_{1}$ and $\psi_{2}\psi_{1}$, that is, boundary recognition after denoising. Even though we focus on these examples, the class of stack operators encompasses other operators of interest, such as grey-scale top hat transforms, and morphological gradients, with flat structuring elements.
	
	\begin{figure}[t]
		\centering
		\includegraphics[width=0.8\linewidth]{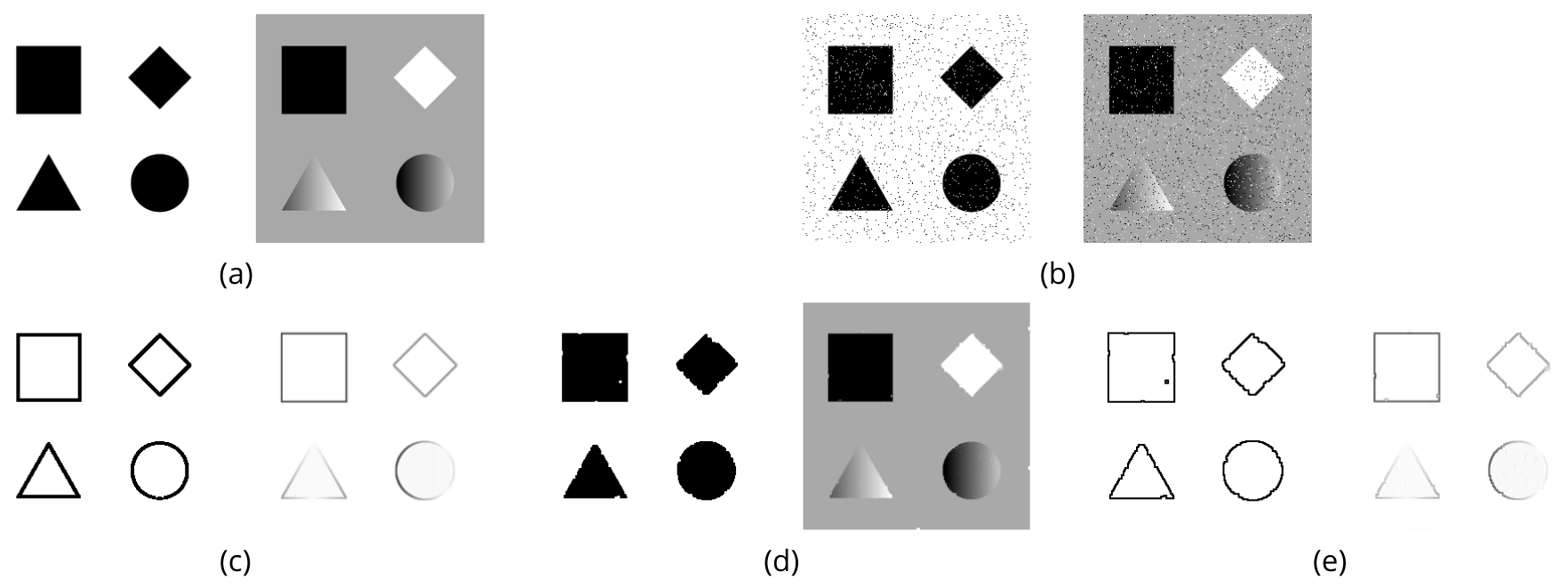}
		\caption{\footnotesize (a) Binary and grey-scale input images. (b) Binary and grey-scale input images with 2.5\% salt and pepper noise. (c) Boundary recognition of (a) by the set operator $\tilde{\psi}_{2}$ defined in Section \ref{Sec1}, and associated stack operator $\psi_{2}$. (d) Noise filtering of (b) by the alternating-sequential filter $\tilde{\psi}_{1} = \epsilon_{W} \, \delta_{W} \, \delta_{W} \, \epsilon_{W}$, and the associated stack filter $\psi_{1}$. (e) Result obtained by applying $\tilde{\psi}_{2}\tilde{\psi}_{1}$ and $\psi_{2}\psi_{1}$ to the images in (b). All images are 256 x 256 pixels and $m = 255$.} \label{fig_examples}
	\end{figure}
		
	In Section \ref{Sec2}, we define the concepts necessary to introduce the stack operators in Section \ref{Sec3}. In Section \ref{Sec6}, we establish the characteristic function, kernel, and basis representation of stack W-operators. In Section \ref{Sec7}, we state and prove the main result of this paper, and we discuss the implications of our results to the design of image operators. In Section \ref{Sec9}, we give directions for future research. Proofs for the main results are presented in the Appendix \ref{SecProofs}.
		
	\section{Grey-scale image and set operators}
	\label{Sec2}
	
	Let $E$ be a countable set, $(E,+)$ be an Abelian group, with zero element $o \in E$, and $M = \{0,\dots,m\}$ for a positive integer $m$ fixed. A function $f: E \mapsto M$ is a grey-scale image, in which $f(x)$ represents the intensity of the pixel $x \in E$. In particular, a grey-scale binary image is such that $f(x) \in \{0,m\}$ for all $x \in E$. The set $\ME$ of all grey-scale images is equipped with a partial order satisfying $f,g \in \ME, f \leq g \iff f(x) \leq g(x), \forall x \in E$. The \textit{image cross-section, or threshold,} at level $t$ is a mapping $T_{t}: \ME \mapsto \{0,1\}^{E}$ given by, for any $f \in \ME, t \in M$ and $x \in E$, $T_{t}[f](x) = \mathds{1}\{f(x) \geq t\}$. An image operator is a mapping $\psi: \ME \mapsto \ME$ and the set of all such operators, denoted by $\Pm$, is equipped with a partial order satisfying $\psi_{1},\psi_{2} \in \Pm, \psi_{1} \leq \psi_{2} \iff \psi_{1}(f) \leq \psi_{2}(f), \forall f \in \ME$. The poset $(\Psi_{M},\leq)$ is a complete lattice.
	
	A binary image can be represented as a function $X: E \mapsto \{0,1\}$ in which $X(x) = 1$ if, and only if, $x \in E$ is a pixel in the image foreground. The set $\{0,1\}^{E}$ of all binary images equipped with the partial order satisfying $X,Y \in \{0,1\}^{E}, X \leq Y \iff X(x) \leq Y(x), \forall x \in E$,	is a complete lattice. Mappings from $\{0,1\}^{E}$ into itself are called set operators, a nomenclature inspired by the fact that $\{0,1\}^{E}$ is isomorphic to the power-set of $E$. We denote the space of all set operators by $\Psi$ and equip it with the point-wise partial order $\tilde{\psi}_{1},\tilde{\psi}_{2} \in \Psi, \tilde{\psi}_{1} \leq \tilde{\psi}_{2} \iff \tilde{\psi}_{1}(X) \leq \tilde{\psi}_{2}(X), \forall X \in \{0,1\}^{E}$. An image operator is an extension of a set operator if they coincide on binary images. We note that the denoising and boundary recognition operators $\psi_{1}$ and $\psi_{2}$ introduced in Section \ref{Sec1} are extensions of the respective set operator.
	
	\begin{definition}
		\label{def_extension}
		An operator $\psi \in \Psi_{M}$ is an extension of $\tpsi \in \Psi$ if $m^{-1} \psi(mX) = \tpsi(X)$ for all $X \in \{0,1\}^{E}$. In particular, an image operator that is the extension of a set operator maps grey-scale binary images into grey-scale binary images. 
	\end{definition}
	
	For each $h \in E$, denote by $\tau_{h}: E \mapsto E$ the translation by $h$ function that maps $x \mapsto x + h$. An operator $\psi \in \Pm$, and a set operator $\tilde{\psi} \in \Psi$, are translation-invariant if they commute with translation, that is, $\psi(f \circ \tau_{h}) = \psi(f) \circ \tau_{h}$ and $\tilde{\psi}(X \circ \tau_{h}) = \tilde{\psi}(X) \circ \tau_{h}$ for all $f \in \ME$, $X \in \{0,1\}^{E}$ and $h \in E$, in  which $\circ$ means function composition.
	
	For $W \in \{0,1\}^{E}$, denote by $X|_{W}$ and $f|_{W}$ the restriction of $X \in \{0,1\}^{E}$ and $f \in \ME$ to the set $\chi(W) \coloneqq \{x \in E:W(x) = 1\}$. An operator $\psi \in \Pm$, and a set operator $\tilde{\psi} \in \Psi$, are locally defined within $W$ if\footnote{The restriction $f|_{W \circ \tau_{-x}}$ should be understood as being equal to $f(y)$ for $y \in \chi(W \circ \tau_{-x})$ and equal to $0$ for $y$ not in this set, so it is an element of $M^{E}$ and $\psi(f|_{W \circ \tau_{-x}})$ is well defined. By an analogous reason, $ \tilde{\psi}(X|_{W \circ \tau_{-x}})$ is well defined.} $\psi(f)(x) = \psi(f|_{W \circ \tau_{-x}})(x)$ and $\tilde{\psi}(X)(x) = \tilde{\psi}(X|_{W \circ \tau_{-x}})(x)$ for all $f \in \ME$, $X \in \{0,1\}^{E}$ and $x \in E$. The locally defined condition implies, for example, that the value of the transformed image $\psi(f)$ at point $x$ depends only on the value of $f$ in the neighbourhood $\chi(W \circ \tau_{-x}) = \{h \in E: W(h - x) = 1\}$ of $x$. Operators that are translation-invariant and locally defined are called W-operators. The collections of image and set W-operators are denoted by $\Pm^{W} \subseteq\Pm$ and $\Psi^{W} \subseteq\Psi$, respectively. We note that the example operators $\psi_{1}$ and $\psi_{2}$ are W-operators, the latter by construction and the former since erosions and dilations are W-operators, and composition preserves the W-operator properties, but in a greater window\footnote{See \cite[Corollary~5.1]{barrera1996set} for a proof of this fact for set operators. That it also holds for stack operators follows from Proposition \ref{prop_Wop}.}.
	
	\section{Stack operators}
	\label{Sec3}
	
	In this paper, we study the collection of stack image operators, defined as follows, and denoted by $\Sm \subseteq \Pm$.
	
	\begin{definition}
		\label{stack_operator}
		An image operator $\psi \in \Pm$ is a stack operator if, and only if, there exists a set operator $\tpsi \in \Psi$ such that \eqref{eq_sf} holds for all $f \in \ME$, or equivalently, if, and only if, (a) for all $x \in E$ and $X \in \{0,1\}^{E}$, $\psi(mX)(x) \in \{0,m\}$ and (b) relation \eqref{stack_formula} holds for all $f \in \ME$.
	\end{definition}
	
	Condition (a) of Definition \ref{stack_operator} implies that stack operators take grey-scale binary images $mX \in \{0,m\}^{E}$ into grey-scale binary images. It follows from this condition that $m^{-1}\psi(mT_{t}[f]) \in \{0,1\}^{E}$ and therefore the right-hand side of \eqref{stack_formula} is an element of $\ME$. Condition (b) then implies that $\psi(f)$ is the average of the transformation by $\psi$ of the cross-sections of $f$ represented as grey-scale binary images. Equivalently, stack operators have an associated set operator $\tilde{\psi}: \{0,1\}^{E} \mapsto \{0,1\}^{E}$ satisfying \eqref{eq_sf}, and the collection $\Psi$ of set operators is lattice isomorphic to the collection of stack image operators $\Sm$. In particular, $(\Sm,\leq)$ is a complete lattice. We state without proof this result that is a direct consequence of the definitions of $\mcF$ and $\mcF^{-1}$ below. This proposition implies that the definitions of stack operators in Definition \ref{stack_operator} are indeed equivalent.
	
	\begin{proposition}
		\label{prop_iso}
		The poset $(\Sm,\leq)$ is lattice isomorphic to $(\Psi,\leq)$ with isomorphism $\mcF: \Psi \mapsto \Sm$ given by $\mcF(\tpsi)(f) = \sum_{t = 1}^{m} \tpsi(T_{t}[f])$ and $\mcF^{-1}(\psi)(X) = m^{-1} \psi(mX)$ for $\tpsi \in \Psi$, $\psi \in \Sm$, $f \in \ME$ and $X \in \{0,1\}^{E}$.
	\end{proposition}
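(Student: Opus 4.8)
The plan is to verify that $\mcF$ and $\mcF^{-1}$ are mutually inverse bijections that both preserve the order in each direction, which establishes the lattice isomorphism. Since a poset isomorphism between complete lattices automatically transports the complete-lattice structure, the final claim that $(\Sm,\leq)$ is a complete lattice follows once the order-isomorphism is in place (using that $(\Psi,\leq)$ is a complete lattice).

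First I would check that $\mcF$ actually lands in $\Sm$. Given $\tpsi \in \Psi$, set $\psi = \mcF(\tpsi)$, so $\psi(f) = \sum_{t=1}^{m} \tpsi(T_{t}[f])$. Since each $\tpsi(T_{t}[f]) \in \{0,1\}^{E}$, the sum takes values in $\{0,\dots,m\}$, so $\psi \in \Pm$. To confirm condition (a), evaluate on a grey-scale binary image $mX$: the cross-sections satisfy $T_{t}[mX] = X$ for every $t \in \{1,\dots,m\}$ (as $mX(x) \geq t \iff X(x) = 1$), hence $\psi(mX) = m\,\tpsi(X) \in \{0,m\}^{E}$. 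For condition (b), I would use Proposition~\ref{prop_Tt}(ii), which gives that the cross-sections $T_{t}[f]$ are nested, so $T_{s}\big[mT_{t}[f]\big] = T_{t}[f]$ for $s \in \{1,\dots,m\}$; summing over $s$ yields $\psi(mT_{t}[f]) = m\,\tpsi(T_{t}[f])$, and averaging over $t$ reproduces $\psi(f)$, establishing \eqref{stack_formula}.

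Next I would verify the two maps invert each other. For $\tpsi \in \Psi$ and $X \in \{0,1\}^{E}$, the computation above gives $\mcF^{-1}(\mcF(\tpsi))(X) = m^{-1}\mcF(\tpsi)(mX) = m^{-1}\cdot m\,\tpsi(X) = \tpsi(X)$, so $\mcF^{-1}\circ\mcF = \mathrm{id}_{\Psi}$. Conversely, for $\psi \in \Sm$ and $f \in \ME$, apply $\mcF$ to $\mcF^{-1}(\psi)$: writing $\tpsi = \mcF^{-1}(\psi)$ we get $\mcF(\tpsi)(f) = \sum_{t=1}^{m}\tpsi(T_{t}[f]) = \sum_{t=1}^{m} m^{-1}\psi(mT_{t}[f])$, which equals $\psi(f)$ precisely by the defining stack identity \eqref{stack_formula}; here I would also note that $\tpsi$ is a genuine set operator because condition (a) guarantees $m^{-1}\psi(mX) \in \{0,1\}^{E}$. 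Hence $\mcF\circ\mcF^{-1} = \mathrm{id}_{\Sm}$.

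Finally I would check order preservation in both directions, which is essentially immediate from the pointwise definitions of the partial orders. If $\tpsi_{1} \leq \tpsi_{2}$ then $\tpsi_{1}(T_{t}[f]) \leq \tpsi_{2}(T_{t}[f])$ for every $t$ and $f$, so summing preserves the inequality and $\mcF(\tpsi_{1}) \leq \mcF(\tpsi_{2})$; conversely $\psi_{1} \leq \psi_{2}$ gives $m^{-1}\psi_{1}(mX) \leq m^{-1}\psi_{2}(mX)$ for all $X$, so $\mcF^{-1}(\psi_{1}) \leq \mcF^{-1}(\psi_{2})$. I do not expect a serious obstacle here; the only point requiring mild care is the bookkeeping around condition (a) and the nesting identity $T_{s}[mT_{t}[f]] = T_{t}[f]$, which is what makes the round trip close exactly and ensures the image of $\mcF$ really consists of stack operators rather than arbitrary elements of $\Pm$.
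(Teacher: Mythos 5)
Your proposal is correct and follows essentially the same route as the paper's own argument: verify condition (a) via the identity $T_{t}[mX]=X$, deduce condition (b), check that $\mcF$ and $\mcF^{-1}$ are mutually inverse, and confirm order preservation in both directions from the pointwise definitions. The only cosmetic remark is that the identity $T_{s}[mT_{t}[f]]=T_{t}[f]$ needs no appeal to the nesting property of Proposition~\ref{prop_Tt}(ii); it is immediate from $mX(x)\geq s \iff X(x)=1$ for binary $X$ and $1\leq s\leq m$, which is exactly the reason you already give for condition (a).
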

	
	We call $\mcF^{-1}(\psi)$ the \textit{characteristic set operator} of $\psi \in \Sm$. It follows from Proposition \ref{prop_iso} that $m^{-1} \psi(mX) = \mcF^{-1}(\psi)(X)$ and therefore $\psi$ is an extension of its characteristic set operator. Moreover, this extension is 1-Lipschitz considering the $L^{1}$-norm in the domain of $\psi$ and the $L^{\infty}$-norm in its image, that is\footnote{Observe that the Lipschitz inequality is tight since, for example, equality is attained when $\psi$ is the identity operator and $g = f + s\mathds{1}_{\{x\}}$ for $s \in \{1,\dots,m\}$ and $x \in E$ fixed.}:
	$\lVert \psi(f) - \psi(g) \rVert_{\infty} \leq \lVert f - g \rVert_{1}$ for $\psi \in \Sm, f,g \in \ME$. In particular, stack operators are $(L^{1},L^{\infty})$-continuous. A proof for the next corollary is in Appendix \ref{SecProofs}.
	
	\begin{corollary}
		\label{corollarySO}
		A stack operator $\psi \in \Sm$ is a 1-Lipschitz extension of its characteristic set operator $\mcF^{-1}(\psi)$. 
	\end{corollary}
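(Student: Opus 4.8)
The plan is to treat the two assertions of the corollary separately: that $\psi$ is an extension of $\mcF^{-1}(\psi)$, and that this extension satisfies the Lipschitz bound \eqref{lipschitz}. The first is essentially a restatement of Proposition \ref{prop_iso}: by the definition of the inverse isomorphism, $\mcF^{-1}(\psi)(X) = m^{-1}\psi(mX)$, which is exactly the identity $m^{-1}\psi(mX) = \tpsi(X)$ appearing in Definition \ref{def_extension} with $\tpsi = \mcF^{-1}(\psi)$. Hence no work is required there, and all the content lies in establishing \eqref{lipschitz}.

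For the Lipschitz estimate I would set $\tpsi \coloneqq \mcF^{-1}(\psi)$ and first express $\psi$ through its characteristic operator at a single pixel. Combining the stack formula \eqref{stack_formula} with condition (a) of Definition \ref{stack_operator} gives $m^{-1}\psi(mT_{t}[f])(x) = \tpsi(T_{t}[f])(x) \in \{0,1\}$, so that $\psi(f)(x) = \sum_{t=1}^{m} \tpsi(T_{t}[f])(x)$. Subtracting the analogous expression for $g$ and applying the triangle inequality yields
\begin{equation*}
	|\psi(f)(x) - \psi(g)(x)| \le \sum_{t=1}^{m} \bigl| \tpsi(T_{t}[f])(x) - \tpsi(T_{t}[g])(x) \bigr|,
\end{equation*}
where each summand lies in $\{0,1\}$ and vanishes whenever $T_{t}[f] = T_{t}[g]$. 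Therefore the right-hand side is bounded above by the number of levels $t$ at which the cross-sections of $f$ and $g$ disagree.

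The crux is to bound this count by $\lVert f - g \rVert_{1}$, which I would do via the identity
\begin{equation*}
	\sum_{t=1}^{m} \lVert T_{t}[f] - T_{t}[g] \rVert_{1} = \lVert f - g \rVert_{1}.
\end{equation*}
This follows by exchanging the order of summation and applying Proposition \ref{prop_Tt}(i) pixelwise: for fixed $y$ the values $T_{t}[f](y) = \mathds{1}\{f(y) \ge t\}$ are monotone in $t$, so $\sum_{t=1}^{m} |T_{t}[f](y) - T_{t}[g](y)| = |f(y) - g(y)|$. Since every level $t$ with $T_{t}[f] \neq T_{t}[g]$ contributes at least one unit to the left-hand side, the number of such levels is at most $\lVert f - g \rVert_{1}$, whence $|\psi(f)(x) - \psi(g)(x)| \le \lVert f - g \rVert_{1}$. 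As this holds for every $x \in E$, taking the supremum over $x$ gives \eqref{lipschitz}, and the $(L^{1},L^{\infty})$-continuity is then immediate.

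The main obstacle I anticipate is precisely this final counting step: the naive bound $|\psi(f)(x) - \psi(g)(x)| \le m$ coming from condition (a) is too weak, and one must exploit that disagreements of the cross-sections across levels are \emph{paid for} by the $L^{1}$ distance through the telescoping structure of Proposition \ref{prop_Tt}(i). Making precise that each per-level disagreement costs at least one unit of $\lVert f - g \rVert_{1}$ is the only genuinely nontrivial point; the remainder is bookkeeping.
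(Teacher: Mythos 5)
Your argument is correct, and it follows the same overall strategy as the paper's proof: both first reduce the Lipschitz bound to showing that the number of levels $t$ at which $T_{t}[f]$ and $T_{t}[g]$ disagree is at most $\lVert f-g\rVert_{1}$, after observing that each disagreeing level can change the output at a pixel by at most one (since $\mcF^{-1}(\psi)$ is binary-valued), and the extension claim is indeed immediate from Proposition \ref{prop_iso}. The two arguments diverge only in how that counting bound is established. The paper picks, for each disagreeing level $t$, a witness pixel $x_{t}$ with $T_{t}[f](x_{t})\neq T_{t}[g](x_{t})$, chosen so that consecutive disagreeing levels sharing a witness form blocks of length $|f(x_{t})-g(x_{t})|$, and then sums over the distinct witnesses. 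You instead invoke the exact identity $\sum_{t=1}^{m}\lVert T_{t}[f]-T_{t}[g]\rVert_{1}=\lVert f-g\rVert_{1}$, which holds pixelwise because at a fixed pixel the differences $T_{t}[f](y)-T_{t}[g](y)$ all have the same sign, and then note that each disagreeing level contributes at least one unit to the left-hand side. Your route avoids the somewhat delicate bookkeeping of the paper's witness selection and is easier to verify; it buys no extra generality, but it turns the intuition that each per-level disagreement is paid for by one unit of $L^{1}$ distance into a one-line identity.
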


	An important class of stack operators is that of the stack W-operators, denoted by $\Sm^{W} = \Pm^{W} \cap \Sm$. The isomorphism $\mcF$ preserves the W-operator properties, and hence maps set W-operators to stack W-operators. We state without proof this result that is a direct consequence of the definitions of $\mcF$ and $\mcF^{-1}$. Clearly, the example operators $\psi_{1}$ and $\psi_{2}$ inherit the W-operator properties of their characteristic set operators.
	
	\begin{proposition}
		\label{prop_Wop}
		An operator $\psi \in \Sm$ is a W-operator if, and only if, $\mcF^{-1}(\psi)$ is a W-operator.
	\end{proposition}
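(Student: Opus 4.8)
The statement to prove is Proposition~\ref{prop_Wop}: an operator $\psi \in \Sm$ is a W-operator if and only if its characteristic set operator $\mcF^{-1}(\psi)$ is a W-operator. Recall that ``W-operator'' means ``translation-invariant and locally defined within some window $W$.'' The plan is to prove each of the two structural properties (translation invariance and local definition) separately, transferring it back and forth across the isomorphism $\mcF$ by exploiting the two facts that (i) the image thresholds $T_t$ commute with translation and with restriction to a window, and (ii) $\mcF$ is built out of the $T_t$ by summation.

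\medskip
\noindent\emph{Translation invariance.}
I would first record the commutation of thresholding with translation: for every $f \in \ME$, $t \in M$, and $h \in E$ one has $T_t[f \circ \tau_h] = T_t[f] \circ \tau_h$, which is immediate from the pointwise definition $T_t[f](x) = \mathds{1}\{f(x) \geq t\}$. Assuming $\tpsi = \mcF^{-1}(\psi)$ is translation-invariant, I would then compute
\begin{equation*}
	\psi(f \circ \tau_h) = \sum_{t=1}^{m} \tpsi\bigl(T_t[f \circ \tau_h]\bigr) = \sum_{t=1}^{m} \tpsi\bigl(T_t[f] \circ \tau_h\bigr) = \sum_{t=1}^{m} \tpsi\bigl(T_t[f]\bigr) \circ \tau_h = \psi(f) \circ \tau_h,
\end{equation*}
using $\mcF(\tpsi) = \psi$, the threshold commutation, and the translation invariance of $\tpsi$ in turn. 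For the converse, if $\psi$ is translation-invariant then I would use $\tpsi(X) = m^{-1}\psi(mX)$ together with the fact that scaling by $m$ commutes with translation, i.e.\ $m(X \circ \tau_h) = (mX)\circ\tau_h$, to push the translation through in the same way.

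\medskip
\noindent\emph{Local definition.}
The argument is parallel but uses that thresholding commutes with window restriction: for $W \in \{0,1\}^{E}$ and any $x$, $T_t[f]|_{W\circ\tau_{-x}} = T_t[f|_{W\circ\tau_{-x}}]$, again by the pointwise definition of $T_t$ (restriction and thresholding both act coordinatewise and so commute). Granting that $\tpsi$ is locally defined within $W$, I would evaluate at a point $x$:
\begin{equation*}
	\psi(f)(x) = \sum_{t=1}^{m} \tpsi\bigl(T_t[f]\bigr)(x) = \sum_{t=1}^{m} \tpsi\bigl(T_t[f]|_{W\circ\tau_{-x}}\bigr)(x) = \sum_{t=1}^{m} \tpsi\bigl(T_t[f|_{W\circ\tau_{-x}}]\bigr)(x) = \psi(f|_{W\circ\tau_{-x}})(x),
\end{equation*}
which is exactly the local-definition identity for $\psi$. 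The converse direction again passes through $\mcF^{-1}$ and the fact that $mX$ restricted to $W\circ\tau_{-x}$ equals $m(X|_{W\circ\tau_{-x}})$.

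\medskip
I do not expect a serious obstacle here, since the proposition is explicitly flagged in the text as ``a direct consequence of the definitions of $\mcF$ and $\mcF^{-1}$.'' The only point requiring mild care is verifying the two commutation lemmas for $T_t$ (with translation and with restriction), and checking that the \emph{same} window $W$ works on both sides of the equivalence—which it does, because the sum over $t$ never mixes windows or shifts. A secondary subtlety is making sure the finite sum legitimately commutes with translation and with evaluation-at-$x$; this is routine because $\tau_h$ and restriction act pointwise and summation is finite, so no interchange-of-limits issue arises.
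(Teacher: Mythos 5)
Your proposal is correct and matches the paper's own (commented-out) proof essentially step for step: both directions are handled by pushing translation and window restriction through the finite sum via the commutation identities $T_t[f\circ\tau_h]=T_t[f]\circ\tau_h$ and $T_t[f]|_{W\circ\tau_{-x}}=T_t[f|_{W\circ\tau_{-x}}]$, and the converse via $\mcF^{-1}(\psi)(X)=m^{-1}\psi(mX)$ with scaling commuting with $\tau_h$ and with restriction. No gaps to report.
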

	
	The \textit{stack filters} have the property of commuting with threshold in the following sense:
	\begin{linenomath}
		\begin{align}
			\label{filter}
			T_{t}[\psi(f)] = m^{-1} \psi(mT_{t}[f])
		\end{align}
	\end{linenomath}
	for all $t \geq 1$ and $f \in \ME$. Indeed, it follows from the definition of $\mcF^{-1}$ that a stack operator $\psi$ with characteristic set operator $\tilde{\psi}$ satisfies \eqref{filter} if, and only if, $T_{t}[\psi(f)] = \tilde{\psi}(T_{t}[f])$ for all $t \geq 1$ and $f \in \ME$, a condition that is known to equivalently define stack operators (see \cite[Definition~2.1]{hirata1999design}). 
	
	The singularity of stack filters is that they satisfy \eqref{filter} for all $t \geq 1$ while general stack operators satisfy \eqref{filter} summing over $t$ (see \eqref{stack_formula}), and this fact actually implies that the stack filters are the stack operators with increasing characteristic set operator. We state this fact as a proposition whose proof is analogous to \cite[Theorem~3]{maragos1987morphological}.
	
	\begin{proposition}
		\label{prop_filter}
		A stack operator $\psi \in \Sm$ is a stack filter if, and only if, $\mcF^{-1}(\psi)$ is increasing.
	\end{proposition}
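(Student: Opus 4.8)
The plan is to prove both implications directly. Throughout I use that $\mcF$ and $\mcF^{-1}$ are mutually inverse, so that writing $\tpsi = \mcF^{-1}(\psi)$ we have $\psi = \mcF(\tpsi)$ and hence $\psi(f)(x) = \sum_{t=1}^{m} \tpsi(T_{t}[f])(x)$ for every $f \in \ME$ and $x \in E$; by the equivalent form of \eqref{filter} noted in the text, $\psi$ is a stack filter precisely when $T_{s}[\psi(f)] = \tpsi(T_{s}[f])$ for all $s \geq 1$ and all $f \in \ME$. The second ingredient is the nesting $T_{1}[f] \geq T_{2}[f] \geq \cdots \geq T_{m}[f]$ supplied by Proposition \ref{prop_Tt}(ii). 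Since the test image used in the converse needs at least two grey levels, I take $m \geq 2$ throughout (the single-level case $m=1$ being degenerate).

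For the ``if'' direction, suppose $\tpsi$ is increasing. Applying $\tpsi$ to the decreasing chain of cross-sections yields a decreasing chain $\tpsi(T_{1}[f]) \geq \cdots \geq \tpsi(T_{m}[f])$ of binary images. Fixing $x \in E$, the sequence $t \mapsto \tpsi(T_{t}[f])(x)$ is then nonincreasing in $\{0,1\}$, so it equals $1$ for $t \leq \ell_{x}$ and $0$ for $t > \ell_{x}$, where $\ell_{x} \coloneqq \#\{t : \tpsi(T_{t}[f])(x) = 1\}$. Consequently $\psi(f)(x) = \ell_{x}$, and for every $s \geq 1$ we read off $T_{s}[\psi(f)](x) = \mathds{1}\{\ell_{x} \geq s\} = \tpsi(T_{s}[f])(x)$, the last equality being exactly the threshold structure of the chain. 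This is the defining identity of a stack filter.

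For the ``only if'' direction, suppose $\psi$ is a stack filter and take $X \leq Y$ in $\{0,1\}^{E}$; I must show $\tpsi(X) \leq \tpsi(Y)$. The idea is to build a single grey-scale image whose cross-sections realise both $X$ and $Y$: let $f(x) = 2$ when $X(x) = 1$, let $f(x) = 1$ when $X(x) = 0$ and $Y(x) = 1$, and let $f(x) = 0$ otherwise, which is well defined precisely because $X \leq Y$. Then $T_{1}[f] = Y$ and $T_{2}[f] = X$, so the stack filter identity gives $T_{1}[\psi(f)] = \tpsi(Y)$ and $T_{2}[\psi(f)] = \tpsi(X)$. Since $\psi(f) \in \ME$, Proposition \ref{prop_Tt}(ii) forces $T_{2}[\psi(f)] \leq T_{1}[\psi(f)]$, that is $\tpsi(X) \leq \tpsi(Y)$, as required.

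I expect the conceptual core, and the main obstacle, to be the ``if'' direction: one has to recognise that increasingness converts the nested cross-sections into a genuinely \emph{stacked} family of binary outputs, which is what lets the pointwise sum in \eqref{stack_formula} be read off as a single threshold level and makes \eqref{filter} hold at every level rather than only on average. The ``only if'' direction is then merely a matter of engineering the right two-level test image $f$, the construction being the only nonroutine piece there.
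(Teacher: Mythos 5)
Your proof is correct and follows essentially the same route as the paper's: the converse uses the same two-level test image (the paper places its three values at $0$, $m-1$, $m$ and thresholds at levels $m-1$ and $m$ rather than your $0,1,2$ with levels $1$ and $2$), and the forward direction rests on the same stacking observation that an increasing $\tpsi$ applied to the nested cross-sections yields a decreasing binary chain. Your forward direction is in fact a cleaner execution --- you read the threshold level $\ell_{x}$ off the chain directly, where the paper establishes $T_{t}[\psi(f)] \leq \tpsi(T_{t}[f])$ and $T_{t}[\psi(f)] \geq \tpsi(T_{t}[f])$ separately by bounding partial sums and applying $T_{t}$ --- and your explicit remark that $m \geq 2$ is needed for the test-image construction is a hypothesis the paper leaves implicit.
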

	
	\section{Representation of stack W-operators}
	\label{Sec6}
	
	The representation of set and lattice operators by mathematical morphology is a classical research topic, and results about the representation of increasing set operators \cite{maragos1987morphological}, translation-invariant set operators \cite{banon1991minimal}, and general lattice operators \cite{banon1993decomposition} have been established. In this section, we deduce the representation of stack W-operators by a characteristic function, a kernel, and a basis, and establish isomorphisms with the respective representation of the associated set W-operator, that are illustrated in Figure \ref{lattice_iso}.
	
	\begin{figure}[t]
		\centering
		\begin{tikzpicture}[scale=0.5]
			\tikzstyle{hs} = [circle,draw=black, rounded corners,minimum width=3em, vertex distance=2.5cm, line width=1pt]
			\tikzstyle{hs2} = [circle,draw=black,dashed, rounded corners,minimum width=3em, vertex distance=2.5cm, line width=1pt]
			
			\node (psiW) at (0,0) {$\Psi^{W}$};
			\node (phiW) at (5,0) {$\Pw$};
			\node (kW) at (10,0) {$\kappa^{W}$};
			\node (piW) at (15,0) {$\Pi^{W}$};
			
			\node (SmW) at (0,-5) {$\SmW$};
			\node (phiMSW) at (5,-5) {$\Phi_{S}^{W}$};
			\node (kMW) at (10,-5) {$\kappa^{W}_{S}$};
			\node (piMW) at (15,-5) {$(\Pi^{W}_{S})^{M}$};
			
			\begin{scope}[line width=1pt]
				\draw[->,line width = 0.6] (psiW) to[bend left]  node[sloped,above] {$\mcC$} (phiW);
				\draw[->,line width = 0.6] (phiW) to[bend left]  node[sloped,below] {$\mcC^{-1}$} (psiW);
				
				\draw[->,line width = 0.6] (phiW) to[bend left]  node[sloped,above] {$\mcK$} (kW);
				\draw[->,line width = 0.6] (kW) to[bend left]  node[sloped,below] {$\mcK^{-1}$} (phiW);
				
				\draw[->,line width = 0.6] (kW) to[bend left]  node[sloped,above] {$\mcB$} (piW);
				\draw[->,line width = 0.6] (piW) to[bend left]  node[sloped,below] {$\mcB^{-1}$} (kW);
				
				\draw[->,line width = 0.6] (SmW) to[bend left]  node[sloped,above] {$\mcCm$} (phiMSW);
				\draw[->,line width = 0.6] (phiMSW) to[bend left]  node[sloped,below] {$\mcCm^{-1}$} (SmW);
				
				\draw[->,line width = 0.6] (phiMSW) to[bend left]  node[sloped,above] {$\mcKm$} (kMW);
				\draw[->,line width = 0.6] (kMW) to[bend left]  node[sloped,below] {$\mcKm^{-1}$} (phiMSW);
				
				\draw[->,line width = 0.6] (kMW) to[bend left]  node[sloped,above] {$\mcB_{S}$} (piMW);
				\draw[->,line width = 0.6] (piMW) to[bend left]  node[sloped,below] {$\mcB^{-1}_{S}$} (kMW);
				
				\draw[->,line width = 0.6,dashed] (psiW) to[bend left]  node[right] {$\mcF$} (SmW);
				\draw[->,line width = 0.6,dashed] (SmW) to[bend left]  node[left] {$\mcF^{-1}$} (psiW);
				
				\draw[->,line width = 0.6,dashed] (phiW) to[bend left]  node[right] {$\mcG$} (phiMSW);
				\draw[->,line width = 0.6,dashed] (phiMSW) to[bend left]  node[left] {$\mcG^{-1}$} (phiW);
				
				\draw[->,line width = 0.6,dashed] (kW) to[bend left]  node[right] {$\mcH$} (kMW);
				\draw[->,line width = 0.6,dashed] (kMW) to[bend left]  node[left] {$\mcH^{-1}$} (kW);
				
				\draw[->,line width = 0.6,dashed] (piW) to[bend left]  node[right] {$\mcI$} (piMW);
				\draw[->,line width = 0.6,dashed] (piMW) to[bend left]  node[left] {$\mcI^{-1}$} (piW);
			\end{scope}
		\end{tikzpicture}
		\caption{\footnotesize Lattice isomorphisms between the representations of set (first row) and stack (second row) W-operators. The dashed lines refer to the isomorphisms that associate the representation of set operators with that of stack operators. } \label{lattice_iso}
	\end{figure}

	There are at least two benefits of deducing representations for stack operators. On the one hand, a representation by the combination of simpler operators may yield a more efficient algorithm to compute the operator. This benefit is evident in the representation of the example operator $\psi_{1}$ by its characteristic set operator. This representation, that is a consequence of isomorphism $\mcF$, allows computing $\psi_{1}$ by applying erosions and dilations in sequence to the image cross-sections and then adding, what might be more efficient than other representations. Furthermore, if the basis (see Section \ref{SecBasis}) of the alternating-sequential set operator $\tpsi_{1}$ was computed, then the operator could be implemented by computing $\tpsi_{1}$ as the supremum of erosions\footnote{The fact that increasing set operators can be represented as the supremum of erosions has been proved in \cite{maragos1989representation}.}, what is more computationally efficient. This representation is a consequence of the isomorphism $\mcF \circ \mcC^{-1} \circ \mcK^{-1} \circ \mcB^{-1}$ that maps the basis of the characteristic set operator to the stack operator, as will be proved in the next sections. We also note that certain representations may yield more efficient algorithms for learning operators from data; see for example the algorithm developed in \cite{marcondes2024algorithm,marcondes2025unrestricted} to learn the composition of set W-operators from data, relying on the representation via the characteristic function (see Section \ref{SecCF}).
	
	On the other hand, considering the main result of this paper (i.e. Theorem \ref{theorem}), since stack operators inherit properties of the characteristic set operators, in order to design stack operators with desired properties, it is enough to do so on the space of set operators, and it may be easier to enforce desired properties on set operators by considering suitable representations for them. For example, via the basis representation, it is easy to enforce the increasing property by representing the operator as the supremum of erosions. With this representation, it remains to design the structuring elements of these erosions, what can be done manually or by learning from data. Likewise, other restrictions may be considered on the basis of the characteristic set operator to yield desired properties, and the intervals of the basis (see Section \ref{SecBasis}) may be designed under these constraints (see Section \ref{Sec7} for more details). 	
	
	\subsection{Representation by characteristic function}
	\label{SecCF}
	
	Define $\mathcal{P}(W) \coloneqq \{0,1\}^{\chi(W)}$ and $\mathcal{P}_{M}(W) \coloneqq M^{\chi(W)}$ as the collection of binary and grey-scale patches with support in $\chi(W)$, respectively. For each $\tilde{\psi} \in \Psi^{W}$ and $\psi \in \Sm^{W}$, define the characteristic functions $\phi_{\psi}: \mathcal{P}_{M}(W) \mapsto M$ and $\tilde{\phi}_{\tilde{\psi}}: \mathcal{P}(W) \mapsto \{0,1\}$ as $\phi_{\psi}(f) = \psi(f)(o)$ and $\tilde{\phi}_{\tilde{\psi}}(X) = \tilde{\psi}(X)(o)$ for $f \in \mathcal{P}_{M}(W)$ and $X \in \mathcal{P}(W)$, recalling that $o$ is the zero element of $(E,+)$. The characteristic function determines the value of each pixel $x \in E$ in the transformed image from the values of the pixels of the original image in its neighbourhood $x + W$. 
	
	Define $\Phi_{S}^{W} \coloneqq \{\phi_{\psi}: \psi \in \SmW\} \subsetneq M^{\mathcal{P}_{M}(W)}$ and $\Phi^{W} \coloneqq \{0,1\}^{\mathcal{P}(W)}$ as the space of characteristic functions of stack and set W-operators, respectively, and consider in them the point-wise partial orders:
	\begin{linenomath}
		\begin{align*}
			&\phi_{1},\phi_{2} \in \Phi_{S}^{W}, \phi_{1} \leq \phi_{2} \iff \phi_{1}(f) \leq \phi_{2}(f), \ \forall f \in \mathcal{P}_{M}(W),\\
			&\tilde{\phi}_{1},\tilde{\phi}_{2} \in \PhiW, \tilde{\phi}_{1} \leq \tilde{\phi}_{2} \iff \tilde{\phi}_{1}(X) \leq \tilde{\phi}_{2}(X), \ \forall X \in \mathcal{P}(W).
		\end{align*}
	\end{linenomath}
	The transformations $\mcC: \Psi^{W} \mapsto \Phi^{W}$ and $\mcC_{S}: \SmW \mapsto \Phi_{S}^{W}$, that map stack and set W-operators into their characteristic functions as
	\begin{linenomath}
		\begin{align*}
			&\mcC(\tpsi)(X) = \tpsi(X)(o), \ X \in \mathcal{P}(W), \tpsi \in \PhiW,\\
			&\mcC_{S}(\psi)(f) = \psi(f)(o), \ f \in \mathcal{P}_{M}(W), \psi \in \SmW,
		\end{align*}
	\end{linenomath} 
	respectively, are lattice isomorphisms. The inverse of these mappings are $$\mcC^{-1}(\tphi)(X)(x) = \tphi(X|_{W \circ \tau_{-x}} \circ \tau_{x}) \text{ and } \mcC_{S}^{-1}(\phi)(f)(x) = \phi(f|_{W \circ \tau_{-x}} \circ \tau_{x}).$$ We state this result as a proposition whose proof is a direct consequence of the definition of the isomorphisms and the properties of W-operators.
	
	\begin{proposition}
		\label{prop_Car}
		The mappings $\mcC$ and $\mcC_{S}$ are lattice isomorphisms between $(\Psi^{W},\leq)$ and $(\Phi^{W},\leq)$, and $(\SmW,\leq)$ and $(\Phi_{S}^{W},\leq)$, respectively. 
	\end{proposition}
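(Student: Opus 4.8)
The plan is to reduce the entire statement to a single \emph{reconstruction formula} expressing any W-operator in terms of its characteristic function, and then read off bijectivity and order preservation from it. Since the arguments for $\mcC$ on $\Psi^{W}$ and for $\mcCm$ on $\SmW$ are formally identical --- the stack conditions (a)--(b) of Definition~\ref{stack_operator} play no role, only translation invariance and local definition do --- I would prove the set-operator case in detail and note that the grey-scale case follows verbatim after replacing $\mcPW$ by $\mcPmW$, $\PhiW$ by $\PhimW$, and $X$ by $f$.

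First I would establish that for every $\tpsi \in \Psi^{W}$, every $X \in \{0,1\}^{E}$ and every $x \in E$,
\begin{linenomath}
\begin{equation*}
\tpsi(X)(x) = \mcC(\tpsi)\bigl(X|_{W\circ\tau_{-x}}\circ\tau_{x}\bigr).
\end{equation*}
\end{linenomath}
This is proved by first applying local definition to replace $\tpsi(X)(x)$ by $\tpsi(X|_{W\circ\tau_{-x}})(x)$, then writing $X|_{W\circ\tau_{-x}} = Y\circ\tau_{-x}$ with $Y \coloneqq X|_{W\circ\tau_{-x}}\circ\tau_{x} \in \mcPW$, and finally using translation invariance to get $\tpsi(Y\circ\tau_{-x})(x) = (\tpsi(Y)\circ\tau_{-x})(x) = \tpsi(Y)(o) = \mcC(\tpsi)(Y)$. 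The formula is exactly $\mcC^{-1}(\mcC(\tpsi)) = \tpsi$, so $\mcC$ is injective with the stated left inverse; the identity $\mcC(\mcC^{-1}(\tphi)) = \tphi$ for $\tphi \in \PhiW$ follows immediately by evaluating the definition of $\mcC^{-1}$ at $x = o$ and using $X|_{W} = X$ for $X \in \mcPW$. To close bijectivity I would check that $\mcC^{-1}(\tphi)$ is genuinely a W-operator: local definition holds because restricting twice to $\chi(W\circ\tau_{-x})$ is the same as restricting once, and translation invariance holds because both $(X\circ\tau_{h})|_{W\circ\tau_{-x}}\circ\tau_{x}$ and $X|_{W\circ\tau_{-(x+h)}}\circ\tau_{x+h}$ equal the function $h'\mapsto X(h'+x+h)$ on $\chi(W)$. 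For $\mcCm$, surjectivity onto $\PhimW$ is immediate since $\PhimW$ is defined as the image, and $\mcCm^{-1}(\phi)\in\SmW$ follows by writing $\phi=\mcCm(\psi)$ and invoking $\mcCm^{-1}(\mcCm(\psi))=\psi$.

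It remains to show the bijections are order isomorphisms, which upgrades them to lattice isomorphisms since an order-preserving bijection with order-preserving inverse automatically preserves all existing meets and joins. The forward implication is immediate: $\tpsi_{1}\leq\tpsi_{2}$ gives $\tpsi_{1}(X)(o)\leq\tpsi_{2}(X)(o)$ for all $X\in\mcPW$, i.e.\ $\mcC(\tpsi_{1})\leq\mcC(\tpsi_{2})$. The reverse implication is where the reconstruction formula earns its keep: assuming $\mcC(\tpsi_{1})\leq\mcC(\tpsi_{2})$ and using the formula at an arbitrary point $x$, the argument $X|_{W\circ\tau_{-x}}\circ\tau_{x}$ lies in $\mcPW$, so the pointwise inequality of characteristic functions transfers to $\tpsi_{1}(X)(x)\leq\tpsi_{2}(X)(x)$ for every $X$ and $x$, giving $\tpsi_{1}\leq\tpsi_{2}$. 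I expect the main obstacle to be purely notational rather than conceptual: correctly tracking the interplay of the restriction $X|_{W\circ\tau_{-x}}$, the shift $\circ\tau_{x}$, and the implicit zero-padding needed to feed a windowed configuration into an operator defined on all of $\{0,1\}^{E}$, so that local definition and translation invariance can be applied without sign or domain errors in the composition of translations.
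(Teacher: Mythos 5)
Your proposal is correct and follows essentially the same route as the paper's proof: verify that $\mcC$ and $\mcC^{-1}$ are mutually inverse via the reconstruction identity $\tpsi(X)(x) = \mcC(\tpsi)(X|_{W\circ\tau_{-x}}\circ\tau_{x})$ (obtained from local definition plus translation invariance), then check order preservation in both directions, handling $\mcCm$ by the analogous argument. Your added verification that $\mcC^{-1}(\tphi)$ is genuinely a W-operator is a small completeness bonus the paper leaves implicit, but it does not change the approach.
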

	
	It follows from Propositions \ref{prop_iso} and \ref{prop_Car} that there exists a lattice isomorphism $\mcG: \Phi^{W} \mapsto \Phi_{S}^{W}$ given, for example, by $\mcG = \mcC_{S} \circ \mcF \circ \mcC^{-1}$. Next lemma gives explicit formulas for $\mcG$ and $\mcG^{-1}$ that can be obtained by explicitly computing $\mcC_{S} \circ \mcF \circ \mcC^{-1}$ and its inverse.
	
	\begin{lemma}
		\label{lemmaG}
		For $\tphi \in \Phi^{W}$ and $f \in \mathcal{P}_{M}(W)$, it holds $\mcG(\tphi)(f) = \sum_{t = 1}^{m} \tphi(T_{t}[f])$ and, for $\phi \in \Phi^{W}_{S}$ and $X \in \mathcal{P}(W)$, it holds $\mcG^{-1}(\phi)(X) = m^{-1} \phi(mX)$.
	\end{lemma}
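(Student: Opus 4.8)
The plan is to obtain both formulas by unwinding the composition $\mcG = \mcC_{S} \circ \mcF \circ \mcC^{-1}$ (and, for the inverse, $\mcG^{-1} = \mcC \circ \mcF^{-1} \circ \mcC_{S}^{-1}$) map by map, evaluating everything at the zero element $o$. The only genuine simplification is that at $o$ the translation $\tau_{o}$ is the identity and $W \circ \tau_{-o} = W$, so that the local restrictions appearing in the formulas for $\mcC^{-1}$ and $\mcC_{S}^{-1}$ become trivial.

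First I would fix $\tphi \in \PhiW$ and write $\tpsi \coloneqq \mcC^{-1}(\tphi) \in \Psi^{W}$, so that $\tpsi(X)(x) = \tphi(X|_{W \circ \tau_{-x}} \circ \tau_{x})$. Applying $\mcF$ produces the stack operator $\psi \coloneqq \mcF(\tpsi)$, which by Proposition \ref{prop_iso} satisfies $\psi(f) = \sum_{t=1}^{m} \tpsi(T_{t}[f])$ and which is again a W-operator by Proposition \ref{prop_Wop}, so that $\mcC_{S}$ applies to it. It then remains to compute $\mcC_{S}(\psi)(f) = \psi(f)(o) = \sum_{t=1}^{m} \tpsi(T_{t}[f])(o)$ for $f \in \mcPmW$. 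Since thresholding is pointwise we have $T_{t}[f] \in \mcPW$, and evaluating $\tpsi$ at $o$ with $\tau_{o} = \mathrm{id}$ and $W \circ \tau_{-o} = W$ collapses each summand to $\tpsi(T_{t}[f])(o) = \tphi(T_{t}[f])$. Summing over $t$ yields $\mcG(\tphi)(f) = \sum_{t=1}^{m} \tphi(T_{t}[f])$, as claimed.

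The computation for $\mcG^{-1}$ is symmetric: starting from $\phi \in \PhimWS$ I would set $\psi \coloneqq \mcC_{S}^{-1}(\phi)$, pass to $\tpsi \coloneqq \mcF^{-1}(\psi)$ with $\tpsi(X) = m^{-1}\psi(mX)$ (Proposition \ref{prop_iso}), and evaluate $\mcC(\tpsi)(X) = \tpsi(X)(o) = m^{-1}\psi(mX)(o)$; the same reduction at $o$ turns $\psi(mX)(o)$ into $\phi(mX)$, giving $\mcG^{-1}(\phi)(X) = m^{-1}\phi(mX)$. I expect the only delicate point to be the bookkeeping between full images on $E$ and their window restrictions in $\mcPW$ and $\mcPmW$: one must check that both thresholding and multiplication by $m$ commute with restriction to $\chi(W)$ (true since both are pointwise) and that the resulting formulas have the stated domains and codomains. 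As a consistency check, the two formulas compose correctly, since for $X \in \mcPW$ every cross-section satisfies $T_{t}[mX] = X$ for $1 \leq t \leq m$, whence $m^{-1}\sum_{t=1}^{m}\tphi(T_{t}[mX]) = \tphi(X)$.
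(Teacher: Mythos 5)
Your proposal is correct and follows essentially the same route as the paper: unwinding $\mcG = \mcC_{S} \circ \mcF \circ \mcC^{-1}$ (and its inverse) map by map and evaluating at $o$, where the restriction $\cdot|_{W \circ \tau_{o}} \circ \tau_{o}$ becomes trivial because $f \in \mcPmW$ (resp.\ $X \in \mcPW$). The closing consistency check is a nice extra but adds nothing beyond the paper's argument.
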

	
	It follows from Lemma \ref{lemmaG} that, as stack operators can be expressed by applying a set operator to the cross-sections of an image and adding, their characteristic function can be represented by applying the characteristic function of the associated characteristic set operator to the cross-section of ``images'' in $\chi(W)$ and adding. Furthermore, lattice isomorphism $\mcG$ implies an equivalency between the collection of stack W-operators and the functions from $\mathcal{P}(W)$ to $\{0,1\}$. In particular, it establishes the known (see \cite{hirata1999design}) equivalency between the translation-invariant and locally defined stack filters and the collection of positive (i.e., increasing) functions from $\mathcal{P}(W)$ to $\{0,1\}$.
	
	\subsection{Kernel representation}
	
	Define as $\kappa^{W} \coloneqq \mathcal{P}(\mathcal{P}(W))$ the collection of subsets of $\mathcal{P}(W)$ and $\kappa_{M}^{W} \coloneqq \mathcal{P}(\mathcal{P}_{M}(W))^{M}$ as the collection of set-valued functions from $M$ to the collection $\mathcal{P}(\mathcal{P}_{M}(W))$ of subsets of $\mathcal{P}_{M}(W)$. These sets are equipped with the partial orders $\mathscr{X}_{1},\mathscr{X}_{2} \in \kappa^{W}, \mathscr{X}_{1} \leq \mathscr{X}_{2} \iff \mathscr{X}_{1} \subseteq \mathscr{X}_{2}$ and $\mathscr{F}_{1},\mathscr{F}_{2} \in \kappa^{W}_{S}, \mathscr{F}_{1} \leq \mathscr{F}_{2} \iff \mathscr{F}_{1}(t) \subseteq \mathscr{F}_{2}(t), \ \forall t \in M$. There exist lattice isomorphisms between $\Phi^{W}$ and $\kappa^{W}$, and $\Phi_{S}^{W}$ and a subset of $\kappa_{S}^{W}$, that map each characteristic function to its kernel representation. Formally, these are mappings $\mcK: \Phi^{W} \mapsto \kappa^{W}$ and $\mcK_{S}: \Phi_{S}^{W} \mapsto \kappa_{S}^{W}$ satisfying
	\begin{linenomath}
		\begin{align*}
			\mcK(\tphi) = \{X \in \mathcal{P}(W): \tphi(X) = 1\} & & \text{ and } & & \mcK_{S}(\phi)(t) = \{f \in \mathcal{P}_{M}(W): t \leq \phi(f)\}
		\end{align*}
	\end{linenomath}
	for $\tphi \in \Phi^{W}, \phi \in \Phi_{S}^{W}$ and $t \in M$, in which $\kappa_{S}^{W} \coloneqq \{\mcK_{S}(\phi): \phi \in \Phi_{S}^{W}\} \subsetneq \kappa_{M}^{W}$.  This definition of kernel is due to \cite{banon1991minimal,banon1993decomposition}.
	
	The next proposition, that is direct from the definition of $\mcK$ and $\mcK_{S}$, shows that these are indeed lattice isomorphisms. The inverse of these mappings are $\mcK^{-1}(\msX)(X) = \mathds{1}\{X \in \msX\}$ and $\mcK_{S}^{-1}(\msF)(f) = \max\{t \in M: f \in \msF(t)\}$.
	
	\begin{proposition}
		\label{prop_kernel}
		The mappings $\mcK$ and $\mcK_{S}$ are lattice isomorphisms between, respectively, $(\Phi^{W},\leq)$ and $(\kappa^{W},\leq)$, and $(\Phi_{S}^{W},\leq)$ and $(\kappa_{S}^{W},\leq)$.
	\end{proposition}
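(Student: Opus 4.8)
The plan is to treat $\mcK$ and $\mcK_{S}$ separately, and in each case to verify three things: that the stated inverse formula is a genuine two-sided inverse, that the map is surjective onto the claimed codomain, and that both the map and its inverse are monotone (so the bijection is an order-isomorphism, hence a lattice isomorphism).

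For $\mcK$ the argument is the classical identification of a Boolean function with its support set. First I would check the two inverse identities. Since $\tphi$ is $\{0,1\}$-valued, $\mcK^{-1}(\mcK(\tphi))(X) = \mathds{1}\{X \in \mcK(\tphi)\} = \mathds{1}\{\tphi(X) = 1\} = \tphi(X)$, while $\mcK(\mcK^{-1}(\msX)) = \{X : \mathds{1}\{X \in \msX\} = 1\} = \msX$. Because $\Phi^{W} = \{0,1\}^{\mathcal{P}(W)}$ is the full space of binary functions and $\kappa^{W} = \mathcal{P}(\mathcal{P}(W))$ is the full power set, surjectivity is automatic. Order-equivalence is then immediate: for $\{0,1\}$-valued functions, $\tphi_{1} \leq \tphi_{2}$ holds iff every $X$ with $\tphi_{1}(X) = 1$ also has $\tphi_{2}(X) = 1$, which is exactly $\mcK(\tphi_{1}) \subseteq \mcK(\tphi_{2})$.

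For $\mcK_{S}$ the key observation is that $\mcK_{S}(\phi)$ records the super-level sets $\{f : t \leq \phi(f)\}$ of $\phi$, a family nested decreasingly in $t$ with $\mcK_{S}(\phi)(0) = \mathcal{P}_{M}(W)$. This guarantees that $\mcK_{S}^{-1}$ is well defined, since for each $f$ the set $\{t \in M : f \in \msF(t)\}$ contains $0$ and is finite, so its maximum exists. I would then verify $\mcK_{S}^{-1} \circ \mcK_{S} = \mathrm{id}$ through $\max\{t \in M : t \leq \phi(f)\} = \phi(f)$, using that $\phi(f) \in M$. Since $\kappa_{S}^{W}$ is by definition the image $\mcK_{S}(\Phi_{S}^{W})$, this single identity already shows $\mcK_{S}$ is injective and hence a bijection onto $\kappa_{S}^{W}$ with two-sided inverse $\mcK_{S}^{-1}$ (for $\msF = \mcK_{S}(\phi) \in \kappa_{S}^{W}$ one has $\mcK_{S}(\mcK_{S}^{-1}(\msF)) = \mcK_{S}(\phi) = \msF$).

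It remains to prove order-equivalence for $\mcK_{S}$, which I expect to be the main obstacle, as it is the step where the level-indexed structure does real work. The forward direction is direct: $\phi_{1} \leq \phi_{2}$ gives $t \leq \phi_{1}(f) \Rightarrow t \leq \phi_{2}(f)$, hence $\mcK_{S}(\phi_{1})(t) \subseteq \mcK_{S}(\phi_{2})(t)$ for every $t \in M$. For the converse, assuming $\mcK_{S}(\phi_{1})(t) \subseteq \mcK_{S}(\phi_{2})(t)$ for all $t$, I would fix $f$ and instantiate the inclusion at the specific level $t = \phi_{1}(f)$: since $\phi_{1}(f) \leq \phi_{1}(f)$ we have $f \in \mcK_{S}(\phi_{1})(\phi_{1}(f))$, so the hypothesis forces $f \in \mcK_{S}(\phi_{2})(\phi_{1}(f))$, that is $\phi_{1}(f) \leq \phi_{2}(f)$; as $f$ was arbitrary this yields $\phi_{1} \leq \phi_{2}$. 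Combining the bijection with two-way order preservation establishes that both $\mcK$ and $\mcK_{S}$ are lattice isomorphisms. \qed
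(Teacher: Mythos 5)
Your proposal is correct and follows essentially the same route as the paper's proof: verify the two-sided inverse identities for $\mcK$ and $\mcK_{S}$ (using that the sets $\mcK_{S}(\phi)(t)$ are decreasing in $t$) and then check order preservation in both directions. The only cosmetic differences are that you obtain $\mcK_{S}\circ\mcK_{S}^{-1}=\mathrm{id}$ from injectivity plus the definition of $\kappa_{S}^{W}$ as an image rather than by direct computation, and you prove the converse order implication by instantiating at $t=\phi_{1}(f)$ instead of via the max characterisation; both are equivalent to the paper's argument.
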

	
	It follows from Lemma \ref{lemmaG} and Proposition \ref{prop_kernel} that there exists a lattice isomorphism $\mcH: \kappa^{W} \mapsto \kappa_{S}^{W}$, for which we give an explicit expression in the next lemma that is proved in the Appendix \ref{SecProofs}.
	
	\begin{lemma}
		\label{lemmaH}
		The mapping $\mcH: \kappa^{W} \mapsto \kappa_{S}^{W}$ given by
		\begin{linenomath}
			\begin{align*}
				\mcH(\mathscr{X})(t) = \left\{\sum_{s = 1}^{m} X_{s}: X_{s} \in \mathcal{P}(W), X_{m} \leq \cdots \leq X_{1}, \sum_{s = 1}^{m} \mathds{1}\{X_{s} \in \mathscr{X}\} \geq t\right\}
			\end{align*}
		\end{linenomath}
		for $\mathscr{X} \in \kappa^{W}$ and $t \in M$ is a lattice isomorphism between $(\kappa^{W},\leq)$ and $(\kappa_{S}^{W},\leq)$ with inverse mapping $\mcH^{-1}(\mathscr{F}) = \left\{X \in \mathcal{P}(W): mX \in \mathscr{F}(m)\right\}$ for $\mathscr{F} \in \kappa_{S}^{W}$.
	\end{lemma}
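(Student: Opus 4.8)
The plan is to exploit the fact that $\mcH$ is, by construction, the composition $\mcKm \circ \mcG \circ \mcK^{-1}$ of the lattice isomorphisms established in Lemma~\ref{lemmaG} and Proposition~\ref{prop_kernel}. Being a composition of lattice isomorphisms, $\mcH$ is automatically a lattice isomorphism with inverse $\mcK \circ \mcG^{-1} \circ \mcKm^{-1}$, so the entire task reduces to verifying that these compositions agree with the explicit formulas in the statement. I would therefore not argue the order-preservation directly but simply unwind the two compositions and match them with the claimed expressions for $\mcH$ and $\mcH^{-1}$.

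First I would compute the forward composition. Starting from $\msX \in \kappa^{W}$, applying $\mcK^{-1}$ gives the characteristic function $\tphi(X) = \mathds{1}\{X \in \msX\}$; applying $\mcG$ gives $\phi(f) = \sum_{s=1}^{m} \tphi(T_{s}[f]) = \sum_{s=1}^{m} \mathds{1}\{T_{s}[f] \in \msX\}$; and applying $\mcKm$ gives $\mcKm(\phi)(t) = \{f \in \mcPmW : t \leq \phi(f)\}$. This yields the intermediate description
\begin{linenomath}
\begin{equation*}
\mcH(\msX)(t) = \Bigl\{f \in \mcPmW : \sum_{s=1}^{m} \mathds{1}\{T_{s}[f] \in \msX\} \geq t\Bigr\}.
\end{equation*}
\end{linenomath}

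The crux is then to re-express this set in the parametrised form of the statement through the bijection between grey-scale images on the window and decreasing chains of binary images. By Proposition~\ref{prop_Tt}(ii) the cross-sections of any $f \in \mcPmW$ satisfy $T_{m}[f] \leq \cdots \leq T_{1}[f]$, and by Proposition~\ref{prop_Tt}(i) one recovers $f = \sum_{s=1}^{m} T_{s}[f]$. Conversely, I would check that any decreasing chain $X_{m} \leq \cdots \leq X_{1}$ in $\mcPW$ determines the image $f = \sum_{s=1}^{m} X_{s}$ whose cross-sections are exactly $T_{s}[f] = X_{s}$, since at each pixel $\sum_{s} X_{s}(x)$ counts the indices with $X_{s}(x)=1$, which form an initial segment by monotonicity. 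Under this bijection the condition $\sum_{s} \mathds{1}\{T_{s}[f] \in \msX\} \geq t$ becomes $\sum_{s} \mathds{1}\{X_{s} \in \msX\} \geq t$ and $f$ becomes $\sum_{s} X_{s}$, giving precisely the claimed formula for $\mcH(\msX)(t)$. I expect this chain/image correspondence to be the single step requiring genuine care, although it follows directly from Proposition~\ref{prop_Tt}.

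Finally I would verify the inverse by the analogous unwinding of $\mcK \circ \mcG^{-1} \circ \mcKm^{-1}$. Starting from $\msF \in \kappa_{S}^{W}$, the map $\mcKm^{-1}$ gives $\phi(f) = \max\{t \in M : f \in \msF(t)\}$, then $\mcG^{-1}$ gives $\tphi(X) = m^{-1}\phi(mX)$, and $\mcK$ gives $\{X : \tphi(X)=1\} = \{X : \phi(mX) = m\}$. Since $m$ is the top element of $M$, the equality $\max\{t : mX \in \msF(t)\} = m$ holds if and only if $m$ belongs to that index set, i.e. $mX \in \msF(m)$, which yields $\mcH^{-1}(\msF) = \{X \in \mcPW : mX \in \msF(m)\}$ as asserted. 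As this composition is by construction the inverse isomorphism, no separate verification of bijectivity is required.
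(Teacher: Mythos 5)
Your proposal is correct and follows essentially the same route as the paper: both write $\mcH = \mcKm \circ \mcG \circ \mcK^{-1}$ and $\mcH^{-1} = \mcK \circ \mcG^{-1} \circ \mcKm^{-1}$, inherit the isomorphism property from Lemma~\ref{lemmaG} and Proposition~\ref{prop_kernel}, and unwind the compositions, with the final step resting on the correspondence between images in $\mcPmW$ and decreasing chains $X_{m} \leq \cdots \leq X_{1}$ via $X_{s} = T_{s}[f]$. You spell out that chain/image bijection in slightly more detail than the paper does, but the argument is the same.
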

	
	\subsection{Basis representation}
	\label{SecBasis}
	
	For $f_{1},f_{2} \in \mathcal{P}_{M}(W)$ and $X_{1},X_{2} \in \mathcal{P}(W)$, the intervals with limits $f_{1},f_{2}$ and $X_{1},X_{2}$ are the sets $[f_{1},f_{2}] = \{f \in \mathcal{P}_{M}(W):f_{1} \leq f \leq f_{2}\}$ and $[X_{1},X_{2}] = \{X \in \mathcal{P}(W):X_{1} \leq X \leq X_{2}\}$, respectively, which are empty if $f_{1} \nleq f_{2}$ or $X_{1} \nleq X_{2}$. For collections $\mathscr{X} \in \mathcal{P}(\mathcal{P}(W))$ and $\mathscr{G} \in \mathcal{P}(\mathcal{P}_{M}(W))$, define the respective maximal collection as formed by the elements in it that are not lesser than any other one: $\textbf{\text{Max}}(\mathscr{X}) = \{X \in \mathscr{X}: Y \in \mathscr{X}, X \leq Y \implies X = Y\}$ and $\textbf{\text{Max}}(\mathscr{G}) = \{f \in \mathscr{G}: g \in \mathscr{G}, f \leq g \implies f = g\}$.
	
	Since set and stack W-operators can be uniquely represented by their kernel, they can be represented by any collection of sets that covers their kernel\footnote{In the case of grey-scale image operators, there should actually be a collection of sets for each $t$, that covers the kernel applied at a point $t$.}. A way of covering a subset of a lattice, is by a collection of intervals, and such a cover with the minimum number of intervals is that of maximal intervals. The basis of an operator was proposed by \cite{banon1991minimal,banon1993decomposition} based on the idea of covering the kernel (applied at point $t$ in the grey-scale case) with maximal intervals.	
	
	Formally, define by $\Pi^{W} \subseteq\mathcal{P}(\mathcal{P}(W))$ the family of collections of maximal intervals in $\mathcal{P}(W)$ and by $\Pi_{S}^{W} \subseteq\mathcal{P}(\mathcal{P}_{M}(W))$ the family of collections of maximal intervals in $\mathcal{P}_{M}(W)$. Consider in these collections the partial orders
	\begin{linenomath}
		\begin{align*}
			&I_{1},I_{2} \in \Pi^{W}, I_{1} \leq I_{2} \iff \forall [X_{1},X_{2}] \in I_{1}, \exists [Y_{1},Y_{2}] \in I_{2} \text{ s.t. } [X_{1},X_{2}] \subseteq[Y_{1},Y_{2}] \\		
			&\mathscr{I}_{1},\mathscr{I}_{2} \in \Pi^{W}_{S}, \mathscr{I}_{1} \leq \mathscr{I}_{2} \iff \forall [f_{1},f_{2}] \in \mathscr{I}_{1}, \exists [g_{1},g_{2}] \in \mathscr{I}_{2} \text{ s.t. } [f_{1},f_{2}] \subseteq[g_{1},g_{2}].
		\end{align*}
	\end{linenomath}
	The mappings $\mcB: \kappa^{W} \mapsto \Pi^{W}$ and $\mcB_{S}: \kappa^{W}_{S} \mapsto (\Pi^{W}_{S})^{M}$ defined as
	\begin{linenomath}
		\begin{align*}
			\mcB(\mathscr{X}) = \textbf{\text{Max}}(\{[X,Y]: [X,Y] \subseteq\mathscr{X}\}) & & \mcB_{S}(\mathscr{F})(t) = \textbf{\text{Max}}(\{[f,g]: [f,g] \subseteq\mathscr{F}(t)\})
		\end{align*}
	\end{linenomath}
	for $\mathscr{X} \in \kappa^{W}, \mathscr{F} \in \kappa^{W}_{S}$ and $t \in M$ are lattice isomorphisms, in which the partial order in $(\Pi^{W}_{S})^{M}$ is $\boldsymbol{\mathscr{I}}_{1},\boldsymbol{\mathscr{I}}_{2} \in (\Pi^{W}_{S})^{M}, \boldsymbol{\mathscr{I}}_{1} \leq \boldsymbol{\mathscr{I}}_{2} \iff \boldsymbol{\mathscr{I}}_{1}(t) \leq \boldsymbol{\mathscr{I}}_{2}(t), \forall t \in M$. The inverse of these transformations are $\mcB^{-1}(I) = \bigcup \{[X,Y]: [X,Y] \in I\}$ and $\mcB_{S}^{-1}(\boldsymbol{\mathscr{I}})(t) = \bigcup \{[f,g]: [f,g] \in \boldsymbol{\mathscr{I}}(t)\}$ for $I \in \Pi^{W}, \boldsymbol{\mathscr{I}} \in (\Pi^{W}_{S})^{M}$ and $t \in M$. This result is stated as a proposition that follows directly from the definitions of $\mcB$ and $\mcB_{S}$. In particular, $\mcB^{-1}$ and $\mcB_{S}^{-1}$ recover the kernel from its covering by the basis.
	
	\begin{proposition}
		\label{prop_basis}
		The mappings $\mcB$ and $\mcB_{S}$ are lattice isomorphisms.
	\end{proposition}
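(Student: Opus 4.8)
The plan is to show that $\mcB$ is an order isomorphism between $(\kappa^{W},\leq)$ and $(\Pi^{W},\leq)$ with $\mcB^{-1}$ as its inverse, and then to invoke the standard fact that an order isomorphism between complete lattices preserves all suprema and infima and is therefore a lattice isomorphism; since $(\kappa^{W},\subseteq)$ is a power set it is a complete lattice, so this suffices. The whole argument for $\mcB_{S}$ is the same construction performed in the finite lattice $\mathcal{P}_{M}(W)$ at each level $t \in M$ and then assembled through the componentwise orders, so I would prove the claim for $\mcB$ carefully and transfer it coordinatewise to $\mcB_{S}$.

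The one fact worth isolating first is purely order-theoretic: in the finite lattice $\mathcal{P}(W)$, every subset $\mathscr{X}$ equals the union of the maximal intervals it contains. Indeed, for $Z \in \mathscr{X}$ the degenerate interval $[Z,Z]$ lies in $\mathscr{X}$, and by finiteness it is contained in some maximal such interval, so $Z$ belongs to a member of $\mcB(\mathscr{X})$; the reverse inclusion holds because every interval counted by $\mcB$ is contained in $\mathscr{X}$. This gives $\mcB^{-1}(\mcB(\mathscr{X})) = \mathscr{X}$, i.e. $\mcB^{-1}\circ\mcB = \mathrm{id}_{\kappa^{W}}$. For the opposite composition I would use that any $I \in \Pi^{W}$ is by definition the collection of maximal intervals of some kernel $\mathscr{X}$; then $\mcB^{-1}(I) = \mathscr{X}$ by the previous identity, and hence $\mcB(\mcB^{-1}(I)) = \mcB(\mathscr{X}) = I$. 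The only delicate point here is that the order on $\Pi^{W}$ is genuinely antisymmetric, which holds because the members of $I$ are pairwise incomparable (no maximal interval contains another), so a two-sided containment of collections forces equality.

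Monotonicity in both directions is then immediate. If $\mathscr{X}_{1} \subseteq \mathscr{X}_{2}$, each maximal interval of $\mathscr{X}_{1}$ is an interval contained in $\mathscr{X}_{2}$, hence sits inside a maximal interval of $\mathscr{X}_{2}$, which is exactly $\mcB(\mathscr{X}_{1}) \leq \mcB(\mathscr{X}_{2})$. Conversely, if $I_{1} \leq I_{2}$ then every point of $\mcB^{-1}(I_{1})$ lies in an interval of $I_{1}$, which is contained in an interval of $I_{2}$, so $\mcB^{-1}(I_{1}) \subseteq \mcB^{-1}(I_{2})$. Thus $\mcB$ and $\mcB^{-1}$ are mutually inverse, order-preserving bijections, giving the order isomorphism and hence, by the remark above, the lattice isomorphism.

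Finally I would repeat the three steps for $\mcB_{S}$. For fixed $t$ the assignment $\mathscr{F}(t) \mapsto \mcB_{S}(\mathscr{F})(t)$ is the identical maximal-interval construction in the finite lattice $\mathcal{P}_{M}(W)$, so the level-wise inverse identities and level-wise monotonicity follow verbatim; because the orders on both $\kappa^{W}_{S}$ and $(\Pi^{W}_{S})^{M}$ are componentwise in $t \in M$, the product of the coordinate isomorphisms is again an isomorphism, and $\mcB_{S}$ is its restriction to $\kappa^{W}_{S}$, an isomorphism onto its image. I expect the main obstacle to be making the two inverse identities airtight rather than any single calculation: everything rests on the finiteness lemma that a subset is recovered from its maximal intervals, and on reading $\Pi^{W}$ (respectively the codomain of $\mcB_{S}$) as exactly the image of $\mcB$, so that the composition $\mcB \circ \mcB^{-1} = \mathrm{id}$ is not circular.
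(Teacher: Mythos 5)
Your proposal is correct and follows essentially the same route as the paper's own argument: verify $\mcB^{-1}\circ\mcB=\mathrm{id}$ via the observation that every point sits in a degenerate interval contained in a maximal one, verify $\mcB\circ\mcB^{-1}=\mathrm{id}$ by reading $\Pi^{W}$ as the collections of maximal intervals of kernels, check monotonicity of both maps, and treat $\mcB_{S}$ coordinatewise in $t\in M$. Your explicit flagging that $\Pi^{W}$ must be taken as the image of $\mcB$ (rather than arbitrary antichains of intervals, for which surjectivity of $\mcB$ would fail) is a welcome precision that the paper leaves implicit, but it does not change the structure of the argument.
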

	
	We state that there exists a lattice isomorphism $\mcI: \Pi^{W} \mapsto (\Pi^{W}_{S})^{M}$ between the basis representation of set and stack W-operators.
	
	\begin{lemma}
		\label{lemma_isoBasis}
		The mapping $\mcI \coloneqq \mcB_{S} \circ \mcH \circ \mcB^{-1}$ is a lattice isomorphism between $\Pi^{W}$ and $(\Pi^{W}_{S})^{M}$.
	\end{lemma}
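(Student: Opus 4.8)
The plan is to recognise that $\mcI$ is, by its very definition, a composition of three maps each of which has already been shown to be a lattice isomorphism, and then to invoke the elementary fact that lattice isomorphisms are closed under composition. First I would verify that the three maps chain together with matching domains and codomains. By Proposition~\ref{prop_basis}, $\mcB: \kappa^{W} \mapsto \Pi^{W}$ is a lattice isomorphism, so its inverse $\mcB^{-1}: \Pi^{W} \mapsto \kappa^{W}$ is as well; by Lemma~\ref{lemmaH}, $\mcH: \kappa^{W} \mapsto \kappa_{S}^{W}$ is a lattice isomorphism; and by Proposition~\ref{prop_basis} again, $\mcB_{S}: \kappa^{W}_{S} \mapsto (\Pi^{W}_{S})^{M}$ is a lattice isomorphism. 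Since the codomain of $\mcB^{-1}$ is exactly the domain of $\mcH$, and the codomain of $\mcH$ is exactly the domain of $\mcB_{S}$, the composition
\[
\Pi^{W} \xrightarrow{\ \mcB^{-1}\ } \kappa^{W} \xrightarrow{\ \mcH\ } \kappa_{S}^{W} \xrightarrow{\ \mcB_{S}\ } (\Pi^{W}_{S})^{M}
\]
is well defined and coincides with $\mcI$.

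It then remains only to record that a composition of order-preserving bijections whose inverses are also order-preserving is again such a map. Each of $\mcB^{-1}$, $\mcH$ and $\mcB_{S}$ is a bijection satisfying $a \leq b \iff f(a) \leq f(b)$ on its respective poset, so $\mcI$ is a bijection from $\Pi^{W}$ onto $(\Pi^{W}_{S})^{M}$ with $I_{1} \leq I_{2} \iff \mcI(I_{1}) \leq \mcI(I_{2})$ for all $I_{1},I_{2} \in \Pi^{W}$, and its inverse is the composition $\mcI^{-1} = \mcB \circ \mcH^{-1} \circ \mcB_{S}^{-1}$ of the same three isomorphisms in reverse order.

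I do not expect any genuine obstacle here: the only point requiring attention is the domain–codomain type-checking of the first step, which is immediate from the codomains and domains stated in Lemma~\ref{lemmaH} and Proposition~\ref{prop_basis}. Everything else follows from the general fact that the class of lattice isomorphisms between posets is closed under composition and inversion, a fact that needs no property specific to the morphological setting.
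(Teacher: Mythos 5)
Your proof is correct and follows the same route as the paper, which simply notes that $\mcI$ is a composition of the lattice isomorphisms $\mcB^{-1}$, $\mcH$ and $\mcB_{S}$ established in Proposition~\ref{prop_basis} and Lemma~\ref{lemmaH}. Your version just spells out the domain--codomain chaining and the closure of lattice isomorphisms under composition, which the paper leaves implicit.
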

	
	In particular, the isomorphism $\mcI$ takes basis of set operators that have one interval to basis of stack operators that have one interval for every $t$. This simple result is key to proving Theorem \ref{theorem}. This result is proved in Appendix \ref{SecProofs}.
	
	\begin{corollary}
		\label{corollary_1interval}
		If $I = [X,Y]$, then $\mcI(I)(t) = \{[tX,tY + (m-t)W]\}$ for $t = 1,\dots,m$.
	\end{corollary}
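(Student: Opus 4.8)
The plan is to unfold the composition $\mcI = \mcB_{S} \circ \mcH \circ \mcB^{-1}$ on the one-interval basis $I = \{[X,Y]\}$ and reduce everything to a single set identity. First I would compute $\mcB^{-1}(I) = \bigcup\{[X',Y'] : [X',Y'] \in I\} = [X,Y]$, so that the relevant kernel is the subset $\mathscr{X} = [X,Y] \subseteq \mathcal{P}(W)$. Applying Lemma \ref{lemmaH} gives the stack kernel $\mathscr{F}(t) := \mcH(\mathscr{X})(t)$, and the last map returns the maximal intervals contained in $\mathscr{F}(t)$. The key observation is that if $\mathscr{F}(t)$ is itself an interval $[p,q]$, then the only maximal interval it contains is $[p,q]$, so $\mcB_{S}(\mathscr{F})(t) = \{[p,q]\}$. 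Hence the whole corollary reduces to proving the set identity $\mcH([X,Y])(t) = [tX,\, tY + (m-t)W]$ for each $t \in \{1,\dots,m\}$.

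For the computation of $\mcH([X,Y])(t)$ I would first invoke Proposition \ref{prop_Tt}: for any $f \in \mathcal{P}_{M}(W)$ the cross-sections satisfy $T_{m}[f] \le \cdots \le T_{1}[f]$ and $f = \sum_{s=1}^{m} T_{s}[f]$, and this is the \emph{unique} decreasing chain in $\mathcal{P}(W)$ summing to $f$. Consequently $f \in \mcH([X,Y])(t)$ if and only if at least $t$ of the cross-sections lie in $[X,Y]$, i.e. $\#\{s : X \le T_{s}[f] \le Y\} \ge t$. Because $s \mapsto T_{s}[f]$ is order-reversing, the set $\{s : X \le T_{s}[f]\}$ is an initial segment $\{1,\dots,a\}$ and $\{s : T_{s}[f] \le Y\}$ is a final segment $\{b,\dots,m\}$, so their intersection is the contiguous block $\{s : b \le s \le a\}$. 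I would then translate these endpoints into pointwise conditions on $f$: $X \le T_{s}[f]$ holds exactly when $s \le f(x)$ at every $x$ with $X(x)=1$, and $T_{s}[f] \le Y$ holds exactly when $s > f(x)$ at every $x$ with $Y(x)=0$. This makes the length of the block an explicit function of $\min_{\,X(x)=1} f(x)$ and $\max_{\,Y(x)=0} f(x)$, and the one (easy) inclusion $\mcH([X,Y])(t) \subseteq [tX, tY+(m-t)W]$ follows by reading off that a block of length $\ge t$ forces $f(x) \ge t$ wherever $X(x)=1$ and $f(x) \le m-t$ wherever $Y(x)=0$, which are exactly the defining inequalities of the interval $[tX, tY+(m-t)W]$ (recall $W \equiv 1$ on $\chi(W)$).

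The hard part will be the reverse inclusion: given $f$ with $tX \le f \le tY + (m-t)W$, I must produce the (cross-section) chain realizing at least $t$ indices inside $[X,Y]$, i.e. I must show the initial segment $\{s : X \le T_{s}[f]\}$ and the final segment $\{s : T_{s}[f] \le Y\}$ overlap in at least $t$ positions. This is where the lower constraint carried by the support of $X$ and the upper constraint carried by the complement of $Y$ must be controlled simultaneously, and it is the only step that is not a direct unfolding of definitions; I expect the bookkeeping of the two segment endpoints, together with $X \le Y$ (so the two constrained coordinate sets are disjoint) and the role of the level $t$, to be the crux. Once both inclusions are in hand, $\mathscr{F}(t) = [tX, tY+(m-t)W]$ is an interval and applying $\mcB_{S}$ as above yields $\mcI(I)(t) = \{[tX, tY+(m-t)W]\}$, with the degenerate case $t=m$ (where the block must be all of $\{1,\dots,m\}$, forcing every cross-section into $[X,Y]$ and giving $[mX,mY]$) serving as a useful consistency check.
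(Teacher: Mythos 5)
Your reduction of the corollary to the set identity $\mcH([X,Y])(t) = [tX,\, tY+(m-t)W]$ is the right way to organise the computation, and your handling of $\mcH$ is in fact more careful than the paper's own proof: you correctly observe that a decreasing chain $X_m \le \cdots \le X_1$ in $\mathcal{P}(W)$ summing to $f$ is unique and must consist of the cross-sections $T_s[f]$, so that $f \in \mcH([X,Y])(t)$ if and only if $\#\{s : X \le T_s[f] \le Y\} \ge t$, which by your segment analysis is the single inequality $\min\{f(x): X(x)=1\} - \max\{f(x): Y(x)=0\} \ge t$ (with the conventions $\min\emptyset = m$, $\max\emptyset = 0$). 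Your forward inclusion is correct.

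The reverse inclusion, which you flag as the crux and leave open, is not merely hard: it fails whenever $X \neq \emptyset$, $Y \neq W$ and $t < m$. Membership in $[tX,\, tY+(m-t)W]$ only yields the two separate bounds $\min_{X(x)=1}f(x) \ge t$ and $\max_{Y(x)=0}f(x) \le m-t$, and these do not imply that their difference is at least $t$. Concretely, take $m=2$, $\chi(W)=\{x_1,x_2\}$, $X=Y=\mathds{1}_{\{x_1\}}$ and $t=1$; the function $f$ with $f(x_1)=f(x_2)=1$ lies in $[X,\,Y+W]$, but its cross-sections are $T_1[f]=W$ and $T_2[f]=0$, neither of which is in $[X,Y]$, so $f \notin \mcH([X,Y])(1)$. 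Indeed $\mcH([X,Y])(1)=\{f: f(x_1)>f(x_2)\}$ is not an interval and contains two maximal intervals, so $\mcI(I)(1)$ is not a singleton. The paper's proof obtains the reverse inclusion by decomposing $f=\sum_{j=1}^{t}X_{s_j}+g$ with $X_{s_j}\in[X,Y]$ and $g\le(m-t)W$, but this drops exactly the chain constraint that your uniqueness observation identifies: such a decomposition need not arise from the cross-sections of $f$. The identity does hold in the cases needed for erosions and anti-dilations ($Y=W$ or $X=\emptyset$) and for $t=m$, but not in general, so you should not expect to close this gap as stated.
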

	
	\section{Restricted classes of stack operators and learning from data}
	\label{Sec7}
	
	It follows from Proposition \ref{prop_iso} that restricted classes of stack operators can be obtained by constraining $\Psi$ and applying $\mcF$. Actually, $\mcF$ preserves many lattice properties of $\tilde{\psi}$, hence restricting $\Psi$ based on these properties will generate analogous restrictions in $\Sm$. The next theorem, for which a proof is presented in Appendix \ref{SecProofs}, establishes these properties for the case of stack W-operators.
	
	\begin{theorem}
		\label{theorem}
		A stack operator $\psi \in \SmW$ is (1) increasing, (2) decreasing, (3) extensive, (4) anti-extensive, (5) an erosion, (6) a dilation, (7) an anti-dilation, (8) an anti-erosion, (9) sup-generating, and (10) inf-generating	if, and only if, so is $\mcF^{-1}(\psi)$.
	\end{theorem}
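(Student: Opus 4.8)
The plan is to fix $\tpsi = \mcF^{-1}(\psi)$, so that $\psi = \mcF(\tpsi)$ and $\psi(f) = \sum_{t=1}^{m} \tpsi(T_{t}[f])$ by Proposition~\ref{prop_iso}, and to combine this explicit form with two structural facts: each threshold commutes with arbitrary suprema and infima, i.e. $T_{t}[\bigvee_{i} f_{i}] = \bigvee_{i} T_{t}[f_{i}]$ and dually (immediate from $T_{t}[f](x) = \mathds{1}\{f(x) \geq t\}$ and the finiteness of $M$), and an image is determined by its cross-sections (Proposition~\ref{prop_Tt}(iii)). The direction ``$\psi$ has the property $\Rightarrow \tpsi$ has it'' is uniform across all ten cases: the map $X \mapsto mX$ embeds $\{0,1\}^{E}$ into $\ME$ preserving order and commuting with suprema and infima (as $m\bigvee_{i} X_{i} = \bigvee_{i} mX_{i}$), so each property of $\psi$, restricted to grey-scale binary images and read through $\tpsi(X) = m^{-1}\psi(mX)$, descends to $\tpsi$. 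Thus the substance is the forward direction.

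For (1)--(4) I would argue directly from the summation formula. If $\tpsi$ is increasing and $f \leq g$, then $T_{t}[f] \leq T_{t}[g]$ for all $t$ by Proposition~\ref{prop_Tt}(iii), whence $\psi(f) = \sum_{t} \tpsi(T_{t}[f]) \leq \sum_{t} \tpsi(T_{t}[g]) = \psi(g)$; the decreasing case is symmetric. For extensivity, from $\tpsi(X) \geq X$ and $f = \sum_{t} T_{t}[f]$ (Proposition~\ref{prop_Tt}(i)) one gets $\psi(f) = \sum_{t} \tpsi(T_{t}[f]) \geq \sum_{t} T_{t}[f] = f$, and anti-extensivity is dual.

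For the homomorphism properties (5)--(8) the term-by-term argument fails, since $\sum_{t}$ does not commute with $\bigwedge$ or $\bigvee$; instead I would use that each of these properties forces $\tpsi$ to be monotone and exploit the resulting cross-section commutation. Erosions and dilations are increasing, so by Proposition~\ref{prop_filter} the operator $\psi$ is a stack filter and hence $T_{t}[\psi(f)] = \tpsi(T_{t}[f])$ for every $t$; combining this with the commutation of $T_{t}$ with $\bigwedge$ (resp. $\bigvee$) and with the homomorphism property of $\tpsi$ yields $T_{t}[\psi(\bigwedge_{i} f_{i})] = \bigwedge_{i} T_{t}[\psi(f_{i})]$ (resp. the $\bigvee$ version), and comparing cross-sections gives $\psi(\bigwedge_{i} f_{i}) = \bigwedge_{i} \psi(f_{i})$, with the empty family automatically yielding preservation of the top (resp. bottom). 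Anti-erosions and anti-dilations are instead decreasing, so $\tpsi(T_{t}[f])$ is nondecreasing in $t$ and the output cross-sections obey the reflected identity $T_{s}[\psi(f)] = \tpsi(T_{m+1-s}[f])$; once this is established, the same transfer argument converts the anti-homomorphism of $\tpsi$ into that of $\psi$.

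For the sup- and inf-generating properties (9)--(10) I would route through the basis representation rather than the summation formula. A set W-operator is sup-generating precisely when its basis is a single interval, and Corollary~\ref{corollary_1interval} shows that $\mcI = \mcB_{S} \circ \mcH \circ \mcB^{-1}$ sends a one-interval basis $[X,Y]$ to the family $t \mapsto \{[tX, tY + (m-t)W]\}$, a single interval at every level $t$; since the diagram of isomorphisms commutes, this identifies single-interval bases on both sides and gives the equivalence, the inf-generating case being dual (alternatively, writing a sup-generating operator as the infimum of an erosion and an anti-dilation lets (5) and (7) plus the lattice isomorphism $\mcF$ settle it). I expect the main obstacle to be twofold: establishing the reflected cross-section identity for decreasing characteristic operators so that the clean transfer argument of (5)--(6) applies verbatim to (7)--(8), and reconciling the lattice-theoretic definitions of sup/inf-generating with the single-interval basis characterisation so that Corollary~\ref{corollary_1interval} can be invoked.
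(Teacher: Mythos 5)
Your proposal is correct, and for cases (1)--(4) and (9)--(10) it essentially coincides with the paper's proof: (3)--(4) use the same summation argument via Proposition~\ref{prop_Tt}(i), and (9)--(10) go through the single-interval basis characterisation and Corollary~\ref{corollary_1interval}, exactly as the paper does (your parenthetical alternative of decomposing a sup-generating $\psi$ as a meet of an erosion and an anti-dilation is riskier in the converse direction, since those factors need not themselves be stack operators, so it is good that you make the basis route primary). The genuine divergence is in (5)--(8). The paper stays entirely inside the representation machinery: it characterises erosions, dilations, anti-dilations and anti-erosions by their kernels being a single interval of the form $[X,W]$ or $[\emptyset,Y]$, and then reads the equivalence off Corollary~\ref{corollary_1interval} together with the explicit inverse $\mcH^{-1}$. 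You instead work from the lattice-theoretic definitions: since these four properties force $\tpsi$ to be monotone, Proposition~\ref{prop_filter} gives the threshold commutation $T_{t}[\psi(f)] = \tpsi(T_{t}[f])$ in the increasing cases, and in the decreasing cases the sequence $t \mapsto \tpsi(T_{t}[f])$ is nondecreasing, which yields the reflected identity $T_{s}[\psi(f)] = \tpsi(T_{m+1-s}[f])$; combined with the fact that $T_{t}$ commutes with arbitrary suprema and infima (valid here because $M$ is finite), the homomorphism property transfers cross-section by cross-section. This identity does hold and the transfer goes through, so the step you flag as the main obstacle is real but surmountable. Your route has the advantage of not depending on the kernel characterisation of these operator classes (and so would survive in settings where that characterisation is less immediate), at the cost of a case split between increasing and decreasing characteristic operators; the paper's route is shorter given that Section~\ref{Sec6} has already been built, and treats all four cases plus (9)--(10) uniformly through Corollary~\ref{corollary_1interval}. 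One small overstatement: your claim that the descent direction is ``uniform across all ten cases'' via $X \mapsto mX$ does not obviously apply to sup- and inf-generating, since the defining decomposition does not restrict termwise to binary images; but your basis argument for (9)--(10) already covers both directions, so nothing is lost.
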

	
	The isomorphisms depicted in Figure \ref{lattice_iso} together with Theorem \ref{theorem} imply that a stack W-operator can be designed by designing a set operator that performs the desired transformation on sets, what actually reduces to designing a binary function in $\mcP(W)$. This is what has been done to design the stack operators in Figure \ref{fig_examples}.
	
	Prior information about properties that the optimal stack operator satisfies can be used to define a restricted class of set operators that, due to Theorem \ref{theorem}, will generate a restricted class of stack operators with the same properties, from which an operator can be learned from data. For example, applying the isomorphism $\mcF \circ \mcC^{-1} \circ \mcK^{-1} \circ \mcB^{-1}$ to the basis $I = \{[A_{i},B_{i}]: i = 1,\dots,n\} \in \Pi^{W}$ of a set operator we get the stack operator given by
	\begin{align}
		\label{so_basis}
		\psi_{I}(f) = \sum_{t = 1}^{m} \bigvee_{i = 1}^{n} \lambda_{[A_{i},B_{i}]}(T_{t}[f]), & & f \in M^{E}
	\end{align}
	in which $\lambda_{[A_{i},B_{i}]}$ is the set operator with characteristic function $\tphi_{[A_{i},B_{i}]}(X) = 1$ if $A_{i} \leq X \leq B_{i}$ and equal to zero otherwise for $X \in \mathcal{P}(W)$, that is the operator with basis $\{[A_{i},B_{i}]\}$. This decomposition is based on the sup-generating representation of set operators developed by \cite{banon1991minimal} and $\lambda_{[A_{i},B_{i}]}$ is called a sup-generating operator.
	
	If pairs of input and target images $(f_{j},g_{j}), j = 1,\dots,N$, are available, then, based on \eqref{so_basis}, a stack operator may be learned from data by minimising the loss $L_{N}(I) = \frac{1}{N} \sum_{j=1}^{N} \lVert \psi_{I}(f_{j}) - g_{j} \rVert_{1}$ over $I$ in a subset of $\Pi^{W}$ that depends on prior knowledge about the target operator, in which the size $n$ of the basis in \eqref{so_basis} controls the \textit{complexity} of the operator. For example, to learn a stack filter, we can consider only basis $I$ with $B_{i} = W$ for all\footnote{It follows from the representation deduced in \cite{maragos1989representation} that increasing operators have basis with intervals of form $[A_{i},W]$.} $i = 1,\dots,n$. Also, to learn extensive stack operators we can consider only basis which contain the interval $[A,W]$ with $A(o) = 1$ and $A(x) = 0$ for $x \neq o$, what guarantees that $f(x) \leq \psi_{I}(f)(x)$ for all $x \in E$. Other properties may be enforced on $\psi_{I}$ by considering an inf-generating representation, under which dilations, decreasing and anti-extensive operators have simplified representations (see \cite{banon1991minimal} for more details).
	
	The minimisation of $L_{N}(I)$ can be carried out by the lattice descent algorithm proposed for training discrete morphological neural networks (DMNN) in \cite{marcondes2024discrete} yielding a method that is an extension of DMNN for grey-scale image operators. By dropping the assumption that the intervals in $I$ are maximal, the representation \eqref{so_basis} is actually an overparametrisation of the operator, since there are more than one collection of intervals representing the same operator, that are all collections that cover its kernel, but the optimisation problem becomes more efficient (see \cite{marcondes2024lattice} for more details). If a characteristic function representation for the set operator was considered instead, then, relying on the isomorphism $\mcF \circ \mcC^{-1}$, the algorithm proposed in \cite{marcondes2024algorithm,marcondes2025unrestricted} could be adapted to learn stack operators without any restriction. Also, a deep neural network might also be considered to represent the characteristic function of the set W-operator and be trained by minimising the error associated with the respective stack operator. We will investigate these learning methods in the future.
	
	\section{Future research}
	\label{Sec9}
	
	As discussed above, the next step in this research is to develop and implement learning methods for stack operators, but there are also interesting theoretical lines of research. Given the simplicity of the representation of stack W-operators, that are equivalent to Boolean functions in $\mcP(W)$, they could be the preferred choice when solving a practical image transformation problem. However, it is necessary to characterise what problems they can solve, that is, what image operators they can approximate. Therefore, the main theoretical topic for future research is to characterise the stack operators within the family of 1-Lipschitz extensions of set operators by investigating if they have universal approximating properties, i.e., by characterising spaces in which the stack operators are dense. Stack operators will be suitable to solve a problem when the optimal operator is in such a space.
	
	Also, it would be interesting to study the representation of stack operators with other kinds of invariance, such as group equivariant operators \cite{penaud2024group} and scale invariant operators \cite{sangalli2021scale}. As a generalisation, other functions that map $M$ into $\{0,1\}$ besides the threshold $T_{t}$ could be considered in representation \eqref{eq_sf} yielding more general classes of grey-scale image operators. In general cases, we could obtain an operator that is an extension of a set operator, but with other Lipschitz constant and maybe with better universal approximation properties. 	
	
	\section*{Acknowledgements}
	
	This work was partially funded by grant \#22/06211-2 S\~ao Paulo Research Foundation (FAPESP). I thank the anonymous reviewers for their comments that greatly improved the manuscript, in particular the proof of Corollary \ref{corollarySO}.
	
	\appendix
	\section{Proof of results}
	\label{SecProofs}

	\begin{proof}[Proof of Corollary \ref{corollarySO}]
		Fix $f,g \in \ME$ and $\psi \in \Sm$, and denote $\tpsi \coloneqq \mcF^{-1}(\psi)$. Define $D(f,g) = \{t \in \{1,\dots,m\}: T_{t}[f] \neq T_{t}[g]\}$ and observe that $\lVert \psi(f) - \psi(g) \rVert_{\infty} \leq \sum_{t \in D(f,g)} \lVert  (\tpsi(T_{t}[f]) - \tpsi(T_{t}[g])) \rVert_{\infty} \leq |D(f,g)|$. We will show that $|D(f,g)| \leq \lVert f - g \rVert_{1}$. If $x \in E$ is such that $|f(x) - g(x)| = s$, then $\{t_{x} + 1,\dots,t_{x} + s\} \subset D(f,g)$ in which $t_{x} = \min(f(x),g(x))$ so, denoting $\Delta_{s} = \{x \in E: |f(x) - g(x)| = s\}$, we conclude that $|D(f,g)| \leq \sum_{s=1}^{m} s |\Delta_{s}|$. The result follows since $\lVert f - g \rVert_{1} = \sum_{s = 1}^{m} s |\Delta_{s}|$.\hfill$\blacksquare$
	\end{proof}

	\begin{proof}[Proof of Lemma \ref{lemmaH}]
		We will show that $\mcH = \mcK_{S} \circ \mcG \circ \mcK^{-1}$ and $\mcH^{-1} = \mcK \circ \mcG^{-1} \circ \mcK_{S}^{-1}$ so the result follows from Lemma \ref{lemmaG} and Proposition \ref{prop_kernel}. On the one hand, for $\mathscr{F} \in \kappa_{S}^{W}$ and $X \in \mathcal{P}(W)$, it holds $(\mcG^{-1} \circ \mcK_{S}^{-1})(\mathscr{F})(X) = m^{-1} \max\{t \in M: mX \in \mathscr{F}(t)\}$, therefore $(\mcK \circ \mcG^{-1} \circ \mcK_{S}^{-1})(\mathscr{F}) = \{X \in \mathcal{P}(W): \max\{t \in M: mX \in \mathscr{F}(t)\} = m\} = \{X \in \mathcal{P}(W): mX \in \mathscr{F}(m)\}$,
		in which the last equality follows from the fact that $\mathscr{F}(t)$ is decreasing on $t$. On the other hand, for $\mathscr{X} \in \kappa^{W}, f \in \mathcal{P}_{M}(W)$ and $t \in M$, it holds $(\mcG \circ \mcK^{-1})(\mathscr{X})(f) = \sum_{t = 1}^{m} \mathds{1}\{T_{t}[f] \in \mathscr{X}\}$, therefore
		\begin{linenomath}
			\begin{align*}
				(\mcK_{S} &\circ \mcG \circ \mcK^{-1})(\mathscr{X})(t) = \left\{f \in \mathcal{P}_{M}(W): t \leq \sum_{s = 1}^{m} \mathds{1}\{T_{s}[f] \in \mathscr{X}\}\right\}\\
				&= \left\{\sum_{s = 1}^{m} X_{s}: X_{s} \in \mathcal{P}(W), X_{m} \leq \cdots \leq X_{1}, \sum_{s = 1}^{m} \mathds{1}\{X_{s} \in \mathscr{X}\} \geq t\right\}.
			\end{align*}
		\end{linenomath}
		To see that the last equality holds, take $X_{s} = T_{s}[f]$ for $f \in \mathcal{P}_{M}(W)$ and $s \in M$. \hfill$\blacksquare$
	\end{proof}

	\begin{proof}[Proof of Corollary \ref{corollary_1interval}]
		Fix $I = \{[X,Y]\}$ and observe that, for $t \geq 1$, $(\mcH \circ \mcB^{-1})(t) = \left\{\sum_{s = 1}^{t} X_{s}: X_{m} \leq \cdots X_{1},\sum_{s = 1}^{t} \mathds{1}\{X_{s} \in [X,Y]\} \geq t\right\}$. Then, $f \in (\mcH \circ \mcB^{-1})(t)$ if, and only if, $f$ can be decomposed as $f = \sum_{j=1}^{t} X_{s_{j}} + g$ with $X_{s_{j}} \in [X,Y]$ and $g(x) \leq (m-t)W$, assuming that $X_{s_{t}} \leq \cdots \leq X_{s_{1}}$. But $f$ can be decomposed as such if, and only if, $f \in [tX,tY + (m-t)W]$. We conclude that $\mcI([X,Y])(t) = \mcB_{S}(\mcH \circ \mcB^{-1})(t) = \{[tX,tY + (m-t)W]\}$.
	\end{proof}
	
	\begin{proof}[Proof of Theorem \ref{theorem}]
		For each $\psi \in \SmW$ denote $\tpsi \coloneqq \mcF^{-1}(\psi)$ and $\tphi \coloneqq \mcC(\tpsi$), and for each $\tpsi \in \Psi^{W}$ denote $\psi \coloneqq \mcF(\tpsi)$ and $\phi \coloneqq \mcCm(\psi)$.
		
		(1) \textit{Increasing}: Direct from Proposition \ref{prop_filter}. (2) \textit{Decreasing}: Analogous to increasing.
		
		(3) \textit{Extensive}: Fix $\tpsi \in \Psi^{W}$ extensive and $f \in \mathcal{P}_{M}(W)$. Then, $\psi(f) = \sum_{t = 1}^{m} \tilde{\psi}(T_{t}[f]) \geq \sum_{t = 1}^{m} T_{t}[f] = f$ in which the inequality follows since $\tilde{\psi}$ is extensive. Fix $\psi \in \SmW$ extensive and $X \in \mathcal{P}(W)$. Then, $\tilde{\psi}(X) = \frac{1}{m} \psi(mX) \geq \frac{1}{m} mX = X$.	(4) \textit{Anti-extensive}: Analogous to extensive.
		
		(5) \textit{Erosion}: Due to the main result in \cite{banon1993decomposition}, set and image operators are erosions if, and only if, their kernel is formed by only one interval of form $[X,W]$ and $[f_{t},mW]$ for all $t \geq 1$, respectively. On the one hand, it follows from Corollary \ref{corollary_1interval} that if $\tpsi$ is an erosion with kernel $[X,W]$ then the kernel of $\psi$ equals $[tX,W]$ for all $t \geq 1$ so it is an erosion. On the other hand, if $\psi$ is an erosion then $\mcK_{S}(\psi)(m) = [f_{m},mW]$ and $\mcK(\tpsi) = \{X \in \mcP(W): f_{m} \leq mX\}$. It is clear that $f_{m} \leq mX$ if, and only if, $T_{1}[f_{m}] \leq X$ and therefore $\mcK(\tpsi) = [T_{1}[f_{m}],W]$, and $\tpsi$ is an erosion.
		
		(6) \textit{Dilation}: Analogous to erosion by considering the dual partial orders, since dilations according to a partial order are erosions according to the dual partial order.
		
		(7) \textit{Anti-dilation}: Anti-dilations are operators such that $\psi(f \vee g) = \psi(f) \wedge \psi(g)$ and $\tpsi(X \vee Y) = \tpsi(X) \wedge \tpsi(Y)$. Due to the main result in \cite{banon1993decomposition}, the kernel of an anti-dilation is formed by a unique interval of form $[\emptyset,Y]$, for set W-operators, and $[\emptyset,g_{t}]$ for all $t \geq 1$, for image operators. A deduction analogous to that of erosions follows by Corollary \ref{corollary_1interval}.
		
		(8) \textit{Anti-erosion}: Anti-erosions are operators such that $\psi(f \wedge g) = \psi(f) \vee \psi(g)$ and $\tpsi(X \wedge Y) = \tpsi(X) \vee \tpsi(Y)$. This case is analogous to anti-dilations by considering the dual partial order. 
		
		(9) \textit{Sup-generating}: A sup-generating operator is the meet between an erosion and an anti-dilation (ref. \cite{banon1993decomposition}). Direct from Corollary \ref{corollary_1interval} since set and image operators are sup-generating if, and only if, their kernel is formed by only one interval.
		
		(10) \textit{Inf-generating}: An inf-generating operator is the join between a dilation and an anti-erosion (ref. \cite{banon1993decomposition}). Analogous to sup-generating by considering the dual partial orders, since sup-generating operators according to a partial order are inf-generating according to the dual partial order.
		
		\hfill$\blacksquare$
	\end{proof}
	
	\vspace{-0.75cm}
	
	\bibliographystyle{splncs04}
	\bibliography{Ref}
\end{document}